\documentclass[twoside,11pt]{article}

\usepackage{blindtext}

%

%
%
%

\usepackage{jmlr2e}



\usepackage{lastpage}
\jmlrheading{26}{2025}{1-\pageref{LastPage}}{10/25}{}{}{Donghwan Rho and Sieun Seo}
\usepackage[utf8]{inputenc} 
\usepackage[T1]{fontenc}    
\usepackage{hyperref}       
\usepackage{url}            
\usepackage{booktabs}       
\usepackage{amsfonts}       
\usepackage{nicefrac}       
\usepackage{microtype}      
\usepackage{xcolor}         
\usepackage{amsmath}
\usepackage{mathtools}
\usepackage{multirow}
\usepackage{float}
\usepackage{makecell}
\usepackage{array}
\usepackage{tabularx}
\usepackage{graphicx}
\definecolor{myPurple}{rgb}{0.5,0,0.5}
\usepackage{amssymb}
\usepackage{algpseudocode}
\usepackage{algorithm}
\usepackage{seqsplit}
\usepackage{enumitem}
\usepackage{contour}
\contourlength{0.8pt}
\usepackage{booktabs}
\usepackage{array}

\usepackage{longtable}
\usepackage{caption}
\captionsetup[longtable]{width=\textwidth}
\usepackage{hyperref}


\newtheorem{assumption}[theorem]{Assumption}

\ShortHeadings{
TSP-Based Token Ordering in Homomorphically Encrypted Language Models
}{Rho, Seo, Sung, Min, and Ryu}
\firstpageno{1}

\begin{document}

\title{Traveling Salesman-Based Token Ordering Improves\\
Stability in Homomorphically Encrypted Language Models}

\author{\name Donghwan Rho\thanks{Equal contribution. Order determined by a coin toss.} \email donghwan\_rho@snu.ac.kr \\
       \addr Department of Mathematical Sciences\\
       Seoul National University\\
       Seoul, Korea
       \AND
       \name Sieun Seo\footnotemark[1] \email sieun1114@ewha.ac.kr \\
       \addr Department of Mathematics\\
       Ewha Womans University\\
       Seoul, Korea
       \AND
       \name Hyewon Sung \email hyewonsung@ewha.ac.kr \\
       \addr Department of Mathematics\\
       Ewha Womans University\\
       Seoul, Korea
       \AND
       \name Chohong Min\footnotemark[2] \email chohong@ewha.ac.kr \\
       \addr Department of Mathematics\\
       Ewha Womans University\\
       Seoul, Korea
       \AND
       \name Ernest K. Ryu\thanks{Co-corresponding authors.} \email eryu@math.ucla.edu \\
       \addr Department of Mathematics\\
       University of California, Los Angeles\\
       Los Angeles, United States of America
       }

\editor{XXX}

\maketitle

\begin{abstract}
As users increasingly interact with large language models (LLMs) using private information, secure and encrypted communication becomes essential. Homomorphic encryption (HE) provides a principled solution by enabling computation directly on encrypted data. Although prior work has explored aspects of running LLMs under HE, the challenge of text generation, particularly next-token prediction, has received limited attention and remains a key obstacle to practical encrypted interaction. In this work, we propose a TSP-based token reordering strategy to address the difficulties of encrypted text generation, together with a post-processing step that further reduces approximation error. Theoretical analysis and experimental results demonstrate that our method prevents collapse, improves coherence in generated text, and preserves data privacy throughout. Overall, our contributions advance the feasibility of practical and privacy-preserving LLM inference.
\end{abstract}

\begin{keywords}
  Large language model, Homomorphic encryption, CKKS, Next-token prediction, Traveling salesman problem
\end{keywords}
\section{Introduction}\label{sec:introduction}

Recent advances in transformer models, including the GPT family \citep{gpt-4,GPT-1,radford2019language,GPT-3} and the BERT family \citep{clark2020electra,he2020deberta,kenton2019bert,lan2019albert,liu2019roberta,sanh2019distilbert}, have transformed large language model (LLM) applications such as text generation, translation, and question answering. As these models are increasingly deployed across diverse domains, however, privacy concerns become critical, particularly when sensitive data is used to generate responses or predictions. This highlights the need for techniques that enable secure inference while preserving user data privacy. Fully Homomorphic Encryption (FHE) addresses this challenge by allowing computations on encrypted data, ensuring that neither the service provider nor external parties can access the underlying information. FHE supports encrypted computations with native operations for approximate addition, subtraction, and multiplication over both real and complex numbers. Its single-instruction-multiple-data (SIMD) capability further accelerates large-scale inner computations, making it especially well-suited for transformer inference. Since neural networks operate on real-valued (continuous) data, the CKKS (Cheon–Kim–Kim–Song) scheme \citep{cheon2017homomorphic,cheon2019full} emerges as a natural choice for applying homomorphic encryption to transformers and other language models.

However, implementing efficient transformer models under FHE poses challenges in next-token prediction. Decoding algorithms widely used in large language models such as argmax, top-$p$, and top-$k$ are difficult to realize in CKKS because they rely heavily on comparison operations, which cannot leverage SIMD and thus incur significant computational costs \citep{jovanovic2022private,zhang2024efficient,zhang2024secure}. Furthermore, after next-token prediction, the output index remains encrypted, making direct embedding look-up infeasible. As a workaround, the input to the subsequent layer is obtained by multiplying the embedding matrix by a one-hot vector. However, the intrinsic noise of FHE prevents constructing an exact one-hot vector, causing this matrix multiplication to yield a linear combination of semantically unrelated embedding vectors.

\paragraph{Contributions.}
In this work, we propose an efficient algorithm for next-token prediction under CKKS, along with a method that addresses the limitations of inexact one-hot vector representations. Our algorithm avoids max operations and sorting, relying solely on SIMD-friendly operations to enable practical homomorphic implementation. We show that the resulting errors are bounded by a small constant, ensuring controlled accuracy. However, even when errors are reduced to a negligible level, subsequent matrix multiplications may still yield semantically unrelated embedding combinations that affect model stability. To mitigate this, we adapt the traveling salesman problem (TSP) to reorder embedding vectors so that semantically similar vectors are placed adjacently, thereby preserving model stability in the presence of encryption noise.

\subsection{Prior work}

\paragraph{Language models under FHE.}
A variety of approaches have been proposed to address the main challenges: non-linear operations, matrix multiplication, and fine-tuning. For non-linear operations, \cite{lee2021minimax} approximates the sign function with minimax approximation, while \cite{lee2023precise} approximates ReLU and max-pooling in convolutional neural networks under FHE. \cite{rho2024encryption} replaces softmax with a Gaussian kernel to eliminate division and max operations, and \cite{cho2024fast} introduces a normalize-and-square method for accurate softmax approximation over a large range.
\cite{lee2023hetal} and \cite{moon2024thor} present algorithms for faster matrix multiplication. Finally, for fine-tuning, \cite{rho2024encryption} shows that LoRA reduces ciphertext–ciphertext multiplications. Beyond this line of work, we focus on random sampling for next-token prediction, contributing to the practical implementation of LLMs under FHE.

\paragraph{Next-token prediction methods of language models.}
Language models typically generate text via next-token prediction, and a variety of decoding algorithms have been explored both prior to and following the advent of language models. The most basic approaches include greedy decoding and probabilistic sampling \citep{graves2013generating}. To generate plausible text, more sophisticated strategies have been developed, such as beam search and threshold-based methods \citep{chen2023fidelity,meister2023locally,nguyen2024turning,vijayakumar2016diverse}. Among these, top-$k$ \citep{top-k} and top-$p$ sampling \citep{top-p} have become the most widely adopted methods, as they produce more natural text. However, they rely on conditional operations such as if-statements, which are inefficient to implement under CKKS.

\paragraph{Secure sampling methods under FHE.}
There are several papers for secure sampling. First, \cite{choi2022secure} address secure sampling in multi-party computation (MPC) \citep{yao1982protocols} settings under FHE, where parties such as server and client communicate for computation in encrypted states. However, MPC incurs substantial communication overhead, while our non-interactive setting avoids this cost. Besides these, there are a few works implementing sampling and argmax under FHE. For example, \cite{folkerts2024tyche} performs secure sampling under TFHE \citep{chillotti2020tfhe} and \cite{jovanovic2022private,zhang2024secure,zhang2024efficient} implement argmax under CKKS \citep{cheon2017homomorphic,cheon2019full}.
Concurrent to our work, \cite{avitan2025efficient} proposes CKKS-compatible argmax and nucleus sampling methods. While they focus on the next-token selection step itself, our work emphasizes the impact of embedding error accumulation and provides quantitative analysis of text quality.

\subsection{Preliminaries}\label{sec:preliminary}
\paragraph{Cheon--Kim--Kim--Song (CKKS).}
Fully homomorphic encryption (FHE) enables computation on encrypted data without decryption, thereby supporting privacy-preserving machine learning. Among various schemes, CKKS \citep{cheon2017homomorphic,cheon2019full} is particularly suitable for transformer inference because it supports efficient approximate arithmetic over real and complex numbers.

In CKKS, a ciphertext can encode a vector of complex numbers, where each entry of the vector is referred to as a slot. Basic arithmetic operations such as addition, multiplication, and rotation act in a SIMD (single instruction, multiple data) fashion, meaning that the same operation is applied to all slots in parallel. 
The supported operations are:
\vspace{3mm}
\begin{itemize}[leftmargin=*, itemsep=0pt, topsep=0pt]
    \item 
    \textbf{Addition}: Adds two ciphertexts, producing a ciphertext that encrypts the componentwise sum of their underlying slots.
    \item 
    \textbf{Multiplication}: Multiplies two ciphertexts, producing a ciphertext that encrypts the componentwise product of their underlying slots.
    \item \textbf{Rotation}: Applies a cyclic shift to the slots of a ciphertext, enabling the rearrangement of encrypted data and supporting more complex computations such as inner products and matrix multiplications.
\end{itemize}
\vspace{3mm}
These basic operations serve as the building blocks for more complex computations. In particular, any algorithm expressed through additions, multiplications, and rotations can be implemented homomorphically under CKKS. Based on these primitives, the following computations can be implemented:
\vspace{3mm}
\begin{itemize}[leftmargin=*, itemsep=0pt, topsep=0pt] 
    \item 
    \textbf{Matrix multiplication}: Matrix--vector multiplication can be implemented by combining the three basic operations. Multiplications compute slot-wise products, rotations align intermediate results to the correct positions, and additions accumulate them to form the final output. 
    \item 
    \textbf{Polynomial evaluation}: Polynomial evaluation is naturally supported since it only requires repeated additions and multiplications of ciphertexts. Owing to the SIMD structure, the same polynomial can be evaluated simultaneously across all slots.
\end{itemize}
\vspace{3mm}
While such computations are well aligned with the CKKS framework, many functions used in machine learning are non-polynomial in nature, which requires a different treatment. 
These functions are not directly supported in CKKS; instead, they are typically approximated by polynomials and then evaluated homomorphically using the procedure described above. 
In practice, approximation techniques such as Chebyshev series expansions or the Remez algorithm are often employed to construct low-degree polynomials with controlled error. For more details, see \cite{cheon2017homomorphic}.

\paragraph{Computation and threat model.}
We consider a server-client framework designed for private inference, operating under the standard honest-but-curious threat model. In this setting, the server is assumed to follow the computational protocol correctly, but may attempt to learn from all data it observes, such as encrypted inputs, intermediate results, and final outputs.

To ensure confidentiality, the client encrypts their input using a public FHE key and sends the ciphertext to the server.
The server, using its plaintext model weights, performs the inference homomorphically and returns the encrypted result. Only the client, who possesses the secret key, can decrypt this output. Owing to the semantic security of the underlying FHE scheme, the server learns nothing about the client's private data throughout the entire interaction.

\paragraph{Next token prediction of language models.}
We quickly describe the auto-regressive next-token prediction of decoder-only language models such as GPT \citep{gpt-4,GPT-1,GPT-2,GPT-3} and Llama series \citep{LLAMA3,LLAMA,llama-2}. Let $\mathcal{M}$ be a decoder-only language model with $L$ layers, $\mathcal{V}$ and $d$ denote the vocabulary and the embedding dimension of $\mathcal{M}$. Given a tokenized input ${X}=[x_0,\ldots,x_{t-1}]\in\mathbb{R}^{d\times t}$, if the output of the last hidden layer of $\mathcal{M}$ is $h_L\in\mathbb{R}^{d\times t}$, then the probability $P(x_{t}=v\,|\,x_0,\ldots,x_{t-1})$ of a token $v\in\mathcal{V}$ being selected as a next token $x_t$ is calculated as follows:
\begin{align*}
    Z_{t-1} &\coloneqq (z_0, \ldots, z_{|\mathcal{V}|-1}) = W h_L^{(t-1)}, \\
    P(x_t = v\,|\, x_0,\ldots,x_{t-1}) &\coloneqq \mathrm{Softmax}(Z_{t-1}) = \frac{\exp(z_v /T)}{\sum_{i=0}^{|\mathcal{V}|-1}\exp(z_i /T)}
\end{align*}
where $W\in\mathbb{R}^{|\mathcal{V}|\times d}$ is the embedding weight, $h_L^{(t-1)}$ is the last column of $h_L$, and $T>0$ is a temperature parameter. In practice, greedy decoding, top-$k$, and top-$p$ sampling are the most widely used methods. 
However, since these methods rely on many comparisons that are inefficient under CKKS, we focus on the probabilistic sampling: $x_t \sim P(\,\cdot\, |\,x_0,\ldots,x_{t-1})$.

\paragraph{Inverse transform sampling.}\label{sec:its}
Inverse transform sampling (ITS) is a standard technique that generates samples from a probability distribution using its cumulative distribution function (CDF) and uniform sampling. Let $X$ be a discrete real-valued random variable with support $\{x_0,\ldots, x_{|\mathcal{V}|-1}\}$ with $p_k=P(X=x_k)$ and define $s_k=\sum_{i=0}^{j} p_i$ $(k=0,\ldots,|\mathcal{V}|-1)$ with $s_{-1}=0$. The procedure of the ITS for $X$ is as follows: (i) Sample $U\sim\mathrm{Uniform([0,1])}$; (ii) find $k$ such that $s_{k-1}\le U < s_k$ (which is equivalent to $s_{k-1} < U \le s_k$, the definition of inverse transform sampling); (iii) return $X=x_k$. In this work, we adapt into a CKKS-friendly variant of the ITS, as described in Algorithm~\ref{alg:homomorphic_its}.

\section{Reordering tokens using TSP to prevent corrupted text generation}

In this section, we explain why the ordering of tokens influences generation under approximate computation, and propose to reorder the tokens using a traveling salesman problem (TSP) approach to prevent corrupted text generation.

\subsection{With imperfect sampling, adjacent tokens affect each other}

\begin{figure}[htbp]
    \centering
    \includegraphics[width=0.80\textwidth]{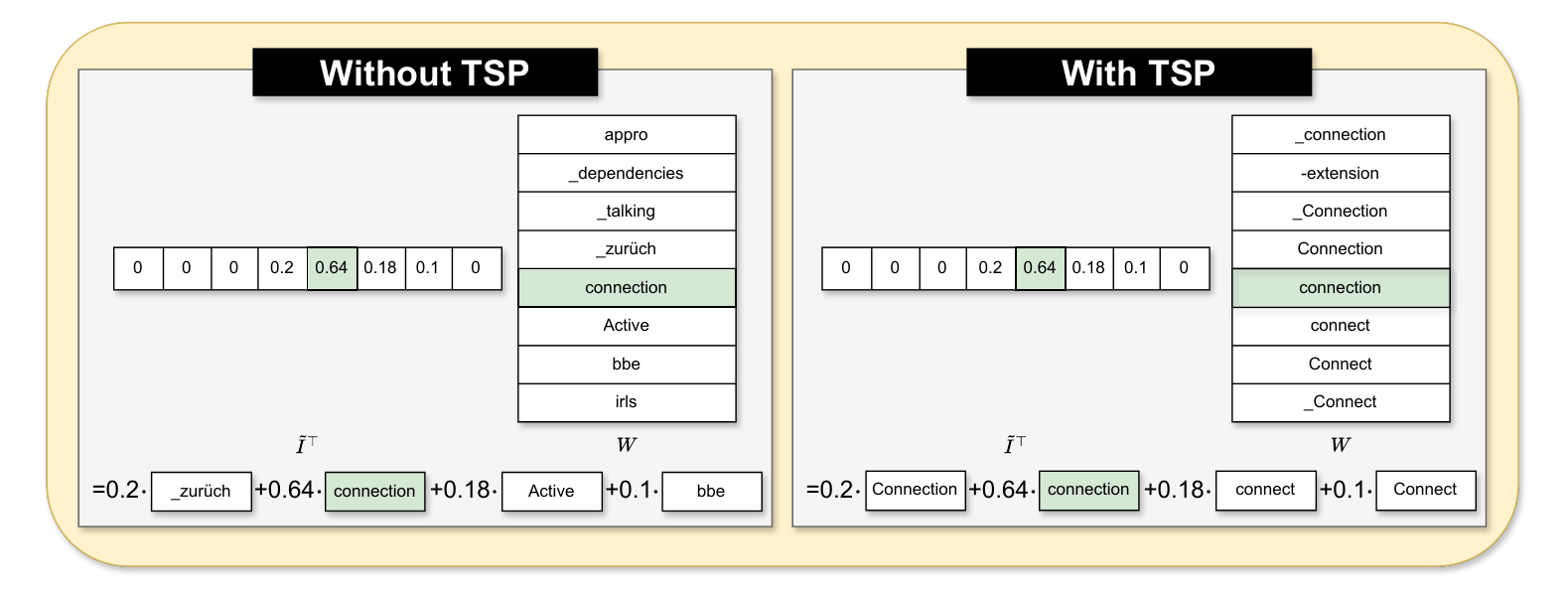}
    \caption{The effect of TSP. \textbf{(Left)} Without TSP, we obtain the linear combination of semantically irrelevant token embeddings. \textbf{(Right)} With TSP, semantically similar tokens can be combined.\label{fig:tsp}
    }
\end{figure}

In our random sampling algorithm, which will be described in Section~\ref{sec:homomorphic inverse transform sampling}, we aim to construct a one-hot vector $I \in \mathbb{R}^{|\mathcal{V}|}$ corresponding to the predicted next token (see Figure~\ref{fig:algorithm}). 
This design choice arises because, under CKKS, the predicted index remains encrypted and thus direct embedding lookup is not feasible. 
Instead, the input to the subsequent layer is obtained by multiplying the embedding matrix by the one-hot vector $I$. 
Formally, the embedding vector $w_i$ for the next token $i$ is given by $w_i = I^{\top} W$, where $W$ denotes the embedding weight matrix.

However, in the homomorphic setting, sampling cannot yield an exact one-hot vector. 
Due to unavoidable approximation errors, the resulting vector $\tilde{I}$ is imperfect and assigns small nonzero weights to multiple indices rather than a single one. 
To formalize this limitation, suppose that a probability vector $p$ and its cumulative vector $(s_k)_{k=0}^{|\mathcal{V}|-1}$ is given as in Section \ref{sec:preliminary}. Inverse transform sampling maps a uniform random sample $r \sim U[0,1]$ into a one-hot vector $\mathbf{e}_{i(r)}$ where $s_{i(r)-1}\le U< s_{i(r)}$. 
In the homomorphic setting, any algorithm can only produce an approximation $\tilde{I}$ of $\mathbf{e}_{i(r)}$, and the following theorem shows that the approximation error is necessarily nonzero.

\begin{theorem}\label{thm:nonzero_error}
Let $\mathbf{e}_{i(r)}$ denote the one-hot vector determined by inverse transform sampling 
from a random sample $r \sim U(0,1)$. 
For any homomorphic algorithm that computes an approximation $\tilde{I}$ of $\mathbf{e}_{i(r)}$, 
the following inequality holds:
\[
\mathbb{E}_{r}\!\left[\|\tilde{I} - \mathbf{e}_{i(r)}\|_\infty\right] > 0 ,
\]
where the expectation is taken over the random sample $r$.
\end{theorem}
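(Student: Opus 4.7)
The guiding idea is a topological mismatch. Any CKKS computation on the ciphertext that encrypts the random input $r$ is built from additions, multiplications, and rotations, so $\tilde I(r)$, viewed as a function of $r$, is a vector of polynomials in $r$, and in particular is continuous on $[0,1]$. On the other hand, $\mathbf e_{i(r)}$ is a piecewise-constant function of $r$ with jump discontinuities at each cumulative boundary $s_k$ for which $p_{k+1}>0$. Since a continuous function cannot agree with a step function in a full neighborhood of a jump, the error must be bounded away from zero on a set of positive Lebesgue measure, which will give a strictly positive expectation.

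I would organize the argument in three short steps. First, justify that each coordinate $\tilde I_k(\cdot)$ is continuous: a CKKS algorithm applies a fixed deterministic procedure to the ciphertext, using plaintext constants (such as the probabilities $p_k$ and cumulative values $s_k$) together with polynomial approximations of any nonlinear sub-routines, all of which preserve polynomial --- hence continuous --- dependence on $r$. Second, in the nondegenerate case pick an index $k$ with $p_k>0$; then $[\mathbf e_{i(r)}]_k$ jumps from $0$ to $1$ at $r=s_{k-1}$. By continuity of $\tilde I_k$ at $s_{k-1}$, at least one of the two one-sided limits differs from the target value by at least $1/2$, so there exists $\delta>0$ and an interval $J$ of length $\delta$ inside $(0,1)$ on which $|\tilde I_k(r)-[\mathbf e_{i(r)}]_k|\ge 1/3$. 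Third, integrate against $r\sim U(0,1)$:
\[
\mathbb E_r\bigl[\|\tilde I(r)-\mathbf e_{i(r)}\|_\infty\bigr] \;\ge\; \frac{1}{3}\,\mathbb P(r\in J)\;=\;\frac{\delta}{3}\;>\;0.
\]

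The main obstacle is making precise the claim in the first step, namely that every admissible ``homomorphic algorithm'' under CKKS really does yield a continuous function of the sampled input $r$. One has to argue that no combination of the native primitives --- including bootstrapping, nested polynomial approximations applied during the pipeline, and even an independent source of internal randomness that is statistically independent of $r$ --- can synthesize a discontinuous dependence on $r$. Once this is granted, the continuity-versus-step-function mismatch reduces the remainder of the proof to the short triangle-inequality estimate above.
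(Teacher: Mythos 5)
Your proof is correct, and it takes a genuinely different route from the paper's. The paper reduces the problem to a quantitative result of Eremenko and Yuditskii on uniform polynomial approximation of $\operatorname{sgn}(x)$: outside a $\delta$-neighborhood of the discontinuities, the error of a degree-$m$ polynomial is at least $\frac{1-\delta}{\sqrt{\pi\delta}}\bigl(\frac{1-\delta}{1+\delta}\bigr)^m\frac{1}{\sqrt m}$, and integrating this lower bound over $r$ gives a strictly positive expectation. Your argument instead exploits only the topological mismatch: a continuous (indeed polynomial) map $r\mapsto\tilde I(r)$ cannot agree with a function having an interior jump, so some coordinate error is at least $1/3$ on an interval of positive length, and integration finishes the job. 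Your route is more elementary and self-contained (it avoids importing the Eremenko--Yuditskii estimate), and it makes the mechanism clearer; the paper's route, by contrast, gives an explicit quantitative lower bound depending on the polynomial degree $m$ and the margin $\delta$, which is potentially useful downstream even though the theorem statement only asserts strict positivity. Two points worth noting explicitly in a write-up. First, both arguments silently assume the distribution is non-degenerate: if all mass sits on a single token then $\mathbf e_{i(r)}$ is constant on $(0,1)$ and a constant polynomial matches it exactly, so you should state that at least two tokens carry positive probability (you gesture at this with ``in the nondegenerate case''). Second, you correctly flag as the load-bearing premise that any CKKS evaluation is a polynomial --- and hence continuous --- function of $r$; the paper asserts the same thing in one sentence without further justification, so you are not at a disadvantage there, but it is the one place where both proofs lean on an informal model of what ``homomorphic algorithm'' means.
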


\begin{proof}
For a random variable $r$, $i(r)$ is a step function. 
Any homomorphic evaluation can be represented by a polynomial function. 
Let the degree of this polynomial be $m$. 

According to the study on the uniform approximation of $\operatorname{sgn}(x)$~\citep{eremenko2006uniform}, the deviation between $\tilde{I}$ and $\mathbf{e}_{i(r)}$ satisfies
\[
\|\tilde{I} - \mathbf{e}_{i(r)}\|_\infty \;\ge\; 
\frac{1-\delta}{\sqrt{\pi \delta}}
\left(\frac{1-\delta}{1+\delta}\right)^m 
\frac{1}{\sqrt{m}}
\]
outside the $\delta$-neighborhood of its discontinuities, for any $\delta > 0$ and regardless of how large $m$ is. 
Consequently, we have
\[
\mathbb{E}_r\big[\|\tilde{I} - \mathbf{e}_{i(r)}\|_\infty\big] 
\;\ge\; (1-2\delta)\,\frac{1-\delta}{\sqrt{\pi \delta}}
\left(\frac{1-\delta}{1+\delta}\right)^m \frac{1}{\sqrt{m}}
\;>\; 0,
\]
which completes the proof.
\end{proof}

Consequently, the computed embedding $\tilde{I}^{\top} W$ is not a single token embedding $w_i$ but rather a linear combination of multiple embeddings, as illustrated in the left of Figure \ref{fig:tsp}.

In LLMs, however, tokens at adjacent indices generally have unrelated semantics (see the left of Table~\ref{tab:TSP_tokens}). 
As a result, when the approximate vector $\tilde{I}$ produces a linear combination of such embeddings, the outcome is semantically meaningless. 
Theorem~\ref{thm:nonzero_error} further shows that this issue cannot be avoided: even with highly accurate function approximation, an exact one-hot vector is unattainable, and thus approximation alone is not a fundamental solution. 
Therefore, we require a different approach. 
Several strategies can be considered to mitigate interference between embeddings of adjacent tokens, and we propose a TSP-based reordering method that places semantically similar tokens adjacently.

\begin{table}[!htbp]
\small
\begin{center}
\caption{\label{tab:TSP_tokens}Adjacent tokens of \texttt{Llama-2-7b-hf} before and after applying TSP ordering. The TSP places similar tokens in adjacent indices. The symbol `\texttt{\_}' represents a space.
}

\begin{tabular}{c@{\hskip 1pt}ccc@{\hskip 1pt}cc}
\toprule
          \multicolumn{3}{c}{Without TSP} & \multicolumn{3}{c}{With TSP} \\ \midrule
          \multicolumn{2}{c}{Index} & Token & \multicolumn{2}{c}{Index} & Token \\ \cmidrule(lr){1-3}\cmidrule(lr){4-6}
          \multicolumn{2}{c}{$\vdots$} & $\vdots$ & \multicolumn{2}{c}{$\vdots$} & $\vdots$ \\
          9961 &  & \texttt{appro} & 18927 & &\texttt{\_connection} \\
          9962 &  & \texttt{\_dependencies} & 18928 & & \texttt{\_extension} \\
          9963 &  & \texttt{\_talking} & 18929 & & \texttt{\_Connection} \\
          9964 &  & \texttt{\_zur\"uck} & 18930 & &  \texttt{Connection} \\
          9965 &  & \texttt{connection} & 18931 & & \texttt{connection} \\
          9966 &  & \texttt{Active} & 18932 & &  \texttt{connect} \\
          9967 &  & \texttt{bbe} & 18933 & &  \texttt{Connect} \\
          9968 &  & \texttt{irls} & 18934 & &  \texttt{\_Connect} \\
          9969 &  & \texttt{\_Inf} & 18935 & &  \texttt{\_connect} \\
          $\vdots$ & & $\vdots$ & $\vdots$ \\
          \bottomrule
\end{tabular}
\end{center}
\end{table}

\subsection{TSP order mitigates damage from imperfect sampling}\label{sec:TSP_reordering}

As a solution to the aforementioned problem, we propose applying the TSP to reorder the tokens to mitigate the damage from imperfect one-hot vectors.

When an imperfect one-hot vector is multiplied with the embedding
matrix, a weighted sum of multiple embedding vectors $\sum_{j=-m}^{n}c_{j}W_{i+j}$ is obtained in place of $W_i$ where $i$ is the selected next-token index with non-negative coefficients $c_{j}$.
However, as shown in the left of Table \ref{tab:TSP_tokens}, the surrounding tokens have semantically unrelated meanings, degrading the quality of the linear sum of obtained embedding vector. This erroneous embeddings accumulate during text generation, resulting in a collapsed text (see Table \ref{tab:Corrupted-Text}).

In contrast, if we rearrange the rows of the pretrained embedding matrix such that the embedding vectors of adjacent token indices are similar to each other,
then text generation collapse can be mitigated.
To resolve this, we minimize the sum of cosine distance of adjacent tokens:
\[
\underset{\pi}{\min}\sum_{i=1}^{|\mathcal{V}|-1}d_{\cos}(W_{\pi(i)},W_{\pi(i+1)}),
\]
where $\pi$ is a permutation and adopt a TSP as a solution.
By applying the TSP, we can place semantically similar tokens adjacently, therefore mitigating the problem mentioned in the previous section, as illustrated in the right of Figure \ref{fig:tsp}.
We visualize the effect of TSP token reordering in Figure \ref{fig:visualize_nic_ic}. In this figure, one can see that after reordering of tokens, the average cosine similarity between adjacent tokens substantially increases (from 0.024 to 0.271 in the case of \texttt{Llama-2-7b-hf}).

\begin{table}[t]
\begin{centering}
{\scriptsize{}}%
\begin{tabular}{cccccccccc}
\toprule 
{\scriptsize{}Prompt} & \multicolumn{9}{c}{{\scriptsize{}Please introduce yourself.}}\tabularnewline
\midrule 
{\scriptsize{}Answer} & \multicolumn{9}{c}{%
\begin{minipage}[t]{0.85\textwidth}%
{\scriptsize{}{}Please introduce yourself. Not long after finishing
the university in Moscow in H Physi am a friend at theTEenza stopped
escape to Moscow State University where our political large a large
number have metita I T a All Message When As Moscow USoni M MS To
was D MSNewisMSMSH MSMSA MSMSMSMS MS M. MM SWMMMS areMSMSIIMSMS M
MMT MMSMSMSZMSMSMS MSMS MMSMSMSMSMSMS MMSMSMSMSMS MS MMSMSMSMS MMSMSMSMSMS
SMS MMSMS WeMS MSMS MSMMSMMS I MOMS MS MS MJ MSMSMSMMS IMMSMSMSMSMSPA
MSMSMSMMSMSMSMSMSMS MSMS MSMS MSMMSItWACMSMS MSMSMMSMMS These IlMS
MSMSMSMSMSMSMSMSMSMSMSMSMSMS MMSMSMSMSMSMS MSMS M MMS MMSMSMSMSMSMSMSMSMSMSMSMS
MS MMSMS MSMSCCMS MMS MSMSMMSMSMS MSMS MMS \$MSMSMSMSNMSMSMSMSMSMSMSMSMSMAMMSMSMSMSMSMSMSMSMSMS
MSMSMSMMSMSMS MMSAAMS weMCMSMSMMSMSMSMSAAMSMMSMSMSMSMSMSMSMSMSMSMSMSMSMS
MSMSMSMSMS MANMSMSMSN Find MSMSMSMSMSMSMSMSMSMSMSMSMSNA MSMSMSMSMSMSMSMS
MSMMSMSMSMSMSMSMSMS STyMSMMS C IllPMSMSMSMSMSMSMSMSMSMSMSMS5 MSMS
GMSLAMSMS MSMSMSMSMS Project MSMSMSMSMSMSMS MSMS MSMSMS MSMSPMSMM
SKMSMSMSMSMSMSMSMMSMSMSMSMSMWMS MSMSMMSMSMSMSMSMSMSMSMSMSMSMSMSMS
MSMS MSMSM MarMSMSMSMSWMSMSMSMSMS MSMSMSMSMSMSMSMS MSMS MSMSMSPAMSMSMSMSMSMS
MSMS MSMSMSLMSMSMS MMSMSMSMS'MSMSMSMMSMMSMSlMSMSMMS MS MPA}%
\end{minipage}}\tabularnewline
\midrule 
{\scriptsize{}Perplexity} & \multicolumn{9}{c}{{\scriptsize{}12.6717}}\tabularnewline
\midrule 
{\scriptsize{}Score} & \multicolumn{9}{c}{{\scriptsize{}4}}\tabularnewline
\bottomrule
\end{tabular}{\scriptsize\par}
\par\end{centering}
\begin{centering}
\par\end{centering}
\caption{\label{tab:Corrupted-Text} An example of corrupted text generated by our homomorphic inverse transform sampling. The token `\texttt{MS}' is repeated meaninglessly.}
\end{table}

However, since the TSP is NP-hard \citep{Karp1972}, naively applying it to a large-size LLM vocabulary is prohibitively expensive. To circumvent this, we adopt a nearest neighbor heuristic \citep{TSP-heuristic}, which heuristically constructs an approximate solution with time complexity $\mathcal{O}(|\mathcal{V}|^2)$.

\section{CKKS-friendly and efficient SIMD-based inverse transform sampling}\label{sec:homomorphic inverse transform sampling}

\begin{figure}[t]
\begin{centering}
\includegraphics[width=0.80\textwidth]{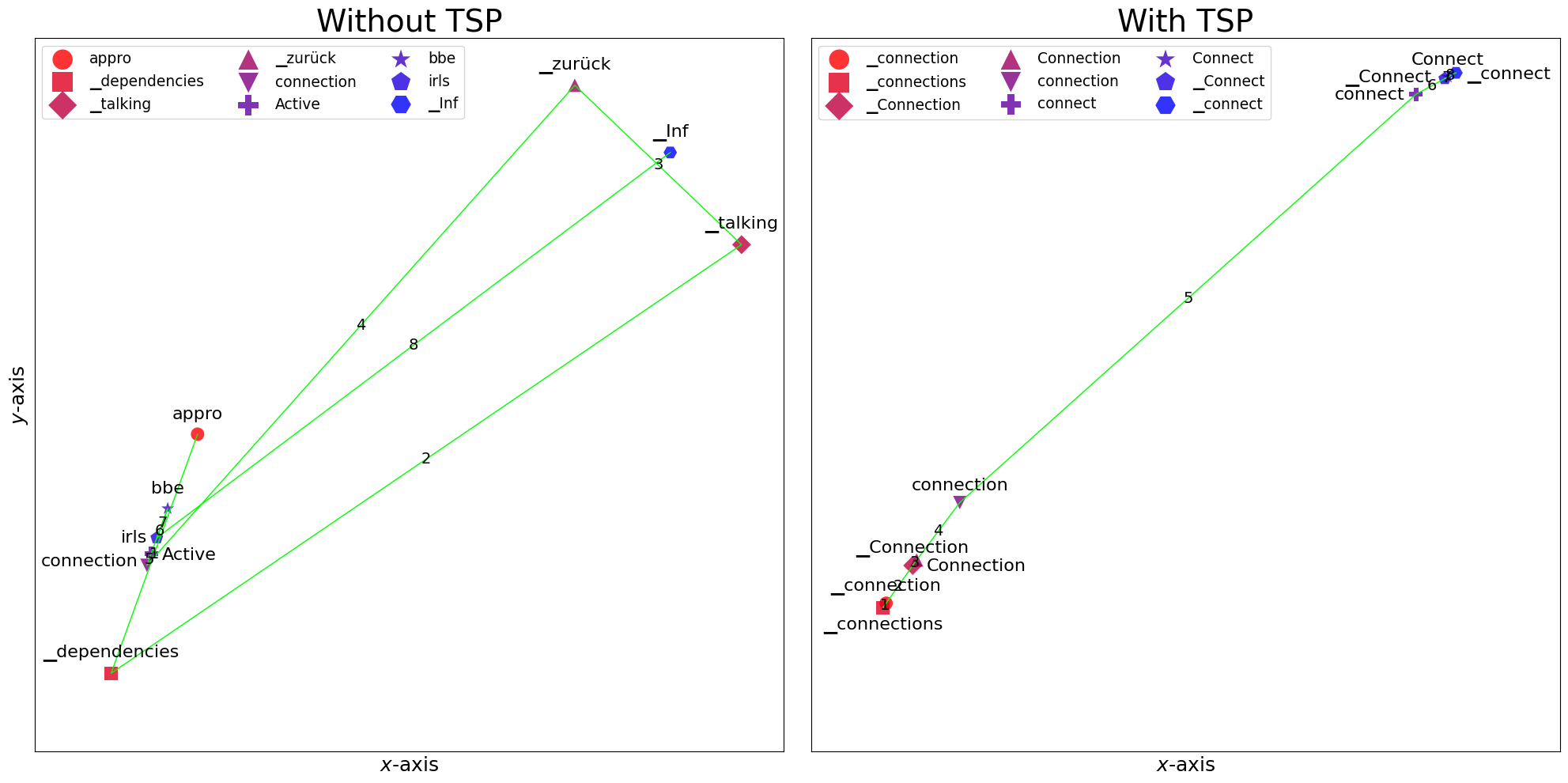}
\par\end{centering}
\caption{\label{fig:visualize_nic_ic}
Visualization of adjacent token embeddings in a 2D plane using UMAP \citep{umap}, with vectors normalized to the unit circle for cosine similarity comparison. In both figures, embedding vectors of tokens adjacent to the token `\texttt{connection}' (\textcolor{myPurple}{$\blacktriangledown$}) are plotted. Each green line connects two adjacent tokens, and longer lines indicate lower similarity. 
In the left figure, there are four long green lines, indicating the low cosine similarity between adjacent tokens. In contrast, with TSP (right), only one long line remains, showing that adjacent tokens are now semantically similar.
}
\end{figure}

In this section, we present an algorithm for next-token prediction suitable for FHE using inverse transform sampling.
Our algorithm starts from the softmax probability vector in next-token prediction stage, and the goal is to obtain the one-hot vector corresponding to the selected token.
As explained in the previous section, methods such
as argmax, top-$k$, and top-$p$ cannot utilize SIMD processing or divide-and-conquer technique (see Appendix \ref{sec:SIMD_FHE} and \ref{divide_and_conquer_FHE}), making them inefficient for FHE.
In contrast, our algorithm leverages efficient SIMD processing.
Our algorithm is described in Algorithm \ref{alg:homomorphic_its} and Figure \ref{fig:algorithm}.
$\widetilde{H}(x)$ in the algorithm refers to the approximation \eqref{eq:Htilde} of the Heaviside function $H(x)=\frac{1}{2}(\mathrm{sgn}(x)+1)$ where $\mathrm{sgn}$ is the sign function.

\begin{algorithm}[!htbp]
  \caption{CKKS-friendly inverse transform sampling}\label{alg:homomorphic_its}
  \begin{algorithmic}[1]
    \State \textbf{Input:} Probability vector $P$.
    \State \textbf{Output:} Approximate one-hot vector $\tilde{I}'$
    \Procedure{\,Homomorphic inverse transform sampling}{$P$}
        \State Sample $r \sim U(0,1)$.
        \State Compute the cumulative distribution vector $F$ of $P$.
        \State Apply the approximate Heaviside function: $h\leftarrow{\widetilde{H}(F-r)}$.
        \State Rotate the slots: ${\rm{Rot}}(h)$.
        \State{Calculate $\tilde{I}\leftarrow h\odot ( 1 - {\rm{Rot}}(h))$.}
        \State Apply post-processing: $\tilde{I}'\leftarrow{\rm{PP}}(\tilde{I}):=3\tilde{I}^2-2\tilde{I}^3$.
        \State \textbf{Return} \,$\tilde{I}'$
    \EndProcedure
  \end{algorithmic}
\end{algorithm}

\subsection{The sampling procedure}

\begin{figure}[t]
    \centering
    \includegraphics[width=0.98\textwidth]{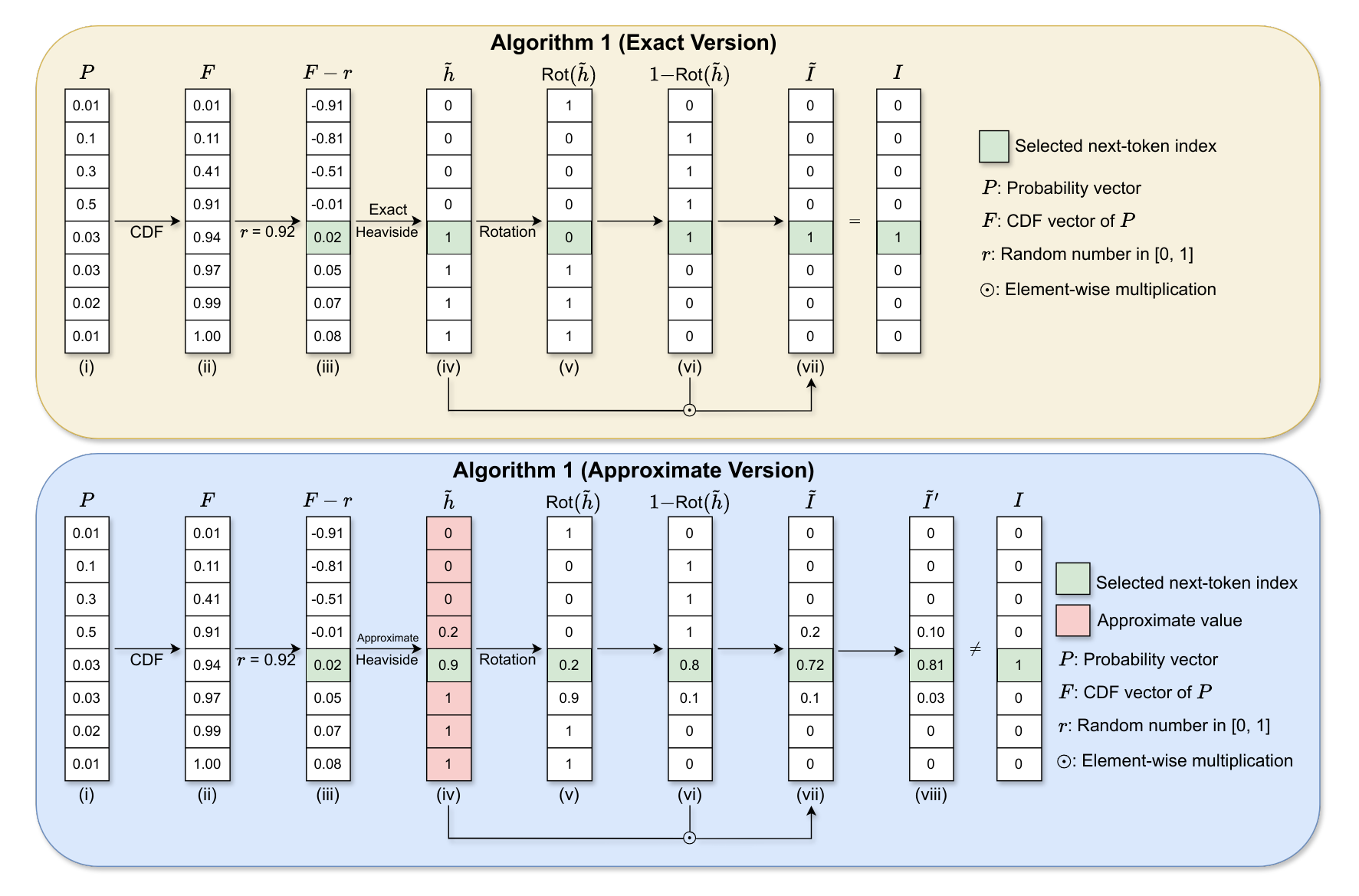}
    \caption{An illustrative example of Algorithm \ref{alg:homomorphic_its}. \textbf{(Top)} With the exact Heaviside function, we obtain the exact one-hot vector. \textbf{(Bottom)} With an approximate Heaviside function, the resulting vector is different from a one-hot vector.    Note that the steps are labeled (i)--(vii) for the exact version, with an additional step (viii) in the approximate version to enhance the one-hotness of $\tilde{I}$.
\label{fig:algorithm}
    }
\end{figure}

We explain Algorithm \ref{alg:homomorphic_its} and refer to Figure \ref{fig:algorithm} for intuitive understanding. 
In step (i) of Figure \ref{fig:algorithm}, a softmax probability vector $P=(p_i)\in\mathbb{R}^{|\mathcal{V}|}$ where $p_i$ denotes the probability of selecting token $i$ is given. In step (ii) we compute the cumulative distribution vector $F=(s_i)_{i=0}^{|\mathcal{V}|-1}$ as $s_i=\sum_{j=0}^i p_j$ with $s_{-1}=0$. 
In step (iii), sample $r\sim\mathrm{Uniform([0,1])}$ and compute $F-r$. In step (iv), apply the Heaviside function. The goal of this step is making each element of $F-r$ 0 or 1 (see the top of Figure \ref{fig:algorithm}). However, we have to approximate the non-polynomial Heaviside function, therefore the result of step (iv) is not 0 or 1 (see the bottom of Figure \ref{fig:algorithm}). In step (v)$\sim$(vii), we apply a rotation, subtraction, and element-wise multiplication. In step (viii), we apply a polynomial to enhance one-hotness of the resulting vector. Since we approximate the Heaviside function, $\tilde{I}$ is different from a one-hot vector, as shown in the bottom of Figure \ref{fig:algorithm}. Intuitively, we obtain the one-hot vector corresponding to the token at the first index where $F-r$ exceeds zero.

Step (ii) can be efficiently implemented as a single homomorphic matrix multiplication operation.
This is because $F$ can
be computed by multiplying $P$ with a lower triangular matrix filled with ones, as shown below:
\[
\left[\begin{array}{ccccc}
1 & 0 & 0 & \cdots & 0\\
1 & 1 & 0 & \cdots & 0\\
\vdots & \vdots & \vdots & \ddots & \vdots\\
1 & 1 & 1 & \cdots & 1
\end{array}\right]\left[\begin{array}{c}
p_{0}\\
p_{1}\\
\vdots\\
p_{|\mathcal{V}|-1}
\end{array}\right]=\left[\begin{array}{l}
p_{0}\\
p_{0}+p_{1}\\
\vdots\\
p_{0}+p_{1}+\cdots+p_{|\mathcal{V}|-1}
\end{array}\right].
\]
This operation involves a plaintext matrix multiplication with an encrypted vector. Given that the plaintext matrix is fully known, such plaintext-ciphertext operations can be performed with high efficiency under homomorphic encryption, particularly when leveraging optimized linear algebra libraries, as described in \cite{bae2024plaintext}.
Step (iv) and (viii) is implemented efficiently, utilizing SIME processing.
Also, no homomorphic operation is required
to sample $r$.
And we need only one homomorphic rotation in step (v) and subtraction and element-wise multiplication of vectors are also cheap.

\begin{table}[!htbp]
\scriptsize
\begin{center}
\caption{\label{tab:weighted_sum_can_exceed_1}
The effects of post-processing.
\textbf{(Case 1)} An example of a weight vector obtained using our algorithm where the sum of all weights exceedingly 1. After applying post-processing, the sum of all weights becomes closer to 1.
\textbf{(Case 2)} An example of a weight whose sum is near 1 but the dominant value is far less than 1. After post-processing, the sum of all weights is still approximately 1, but the dominant weight increases, and the other weights significantly decrease.
}
\begin{tabular}{ccccccccccc}
\toprule
          & & \multicolumn{8}{c}{Top $i$-th Element} & \\ \cmidrule(lr){3-10}
          & Post-processing & 1 & 2 & 3 & 4 & 5 & 6 & $\cdots$ & 512 & Sum \\\midrule
          \multirow{2}{*}{Case 1} & False & 0.4575 & 0.4335 & 0.1816 & 0.0942 & 0.0863 & 0.0863 & $\cdots$ & 0.0049 & 3.9572 \\
          & True & 0.4364 & 0.4008 & 0.0870 & 0.0249 & 0.0211 & 0.0211 & $\cdots$ & $7.18e-5$ & \textbf{1.0504}\\\midrule
          \multirow{2}{*}{Case 2} & False & 0.9292 & 0.0413 & 0.0227 & 0.0198 & 0.0003 & 0.0000 & $\cdots$ & 0.0000 & 1.0133 \\
          & True & \textbf{0.9857} & 0.0050 & 0.0015 & 0.0012 & 0.0000 & 0.0000 & $\cdots$ & 0.0000 & \textbf{0.9934}\\
          
          \bottomrule
\end{tabular}
\end{center}
\end{table}

Finally, we explain step (viii) in the bottom of Figure \ref{fig:algorithm}, which we call \textit{post-processing}. As stated in the previous section, although we approximate the Heaviside function with high accuracy, sometimes the resulting vector can be significantly different from a one-hot vector as shown in Table \ref{tab:weighted_sum_can_exceed_1}.
For example, the sum of the resulting vector can significantly exceed 1. Further, although the sum of the resulting vector is approximately 1, the dominant value can be much less than 1. These cases are problematic since they result in `big' or `mixed' embedding vectors. To remedy these, we propose a function PP defined as $\text{PP}(x)=-2x^3+3x^2$, which is also used in \cite{HELUT}. This function makes an element close to 0 closer to 0 and an element close to 1 closer to 1. It is FHE-friendly since the degree of $f$ is low, consuming low depth. Post-processing is applied in step 9 of Algorithm \ref{alg:homomorphic_its}. By applying post-processing, the sum of the resulting vector of our algorithm becomes closer to a one-hot vector as in Table \ref{tab:weighted_sum_can_exceed_1}. See Appendix \ref{sec:post_processing_necessity} for the case analysis of the need for post-processing. Section \ref{sec: experimental results} presents the results of the ablation study that evaluate the effectiveness of post-processing in improving text generation quality.

\subsection{Imperfect one-hot vector generation in homomorphic inverse transform
sampling\label{subsec:Ambiguous-One-Hot-Vector}}

The problematic part of Algorithm \ref{alg:homomorphic_its} is step (iv). If we apply the Heaviside function in step (iv), it is straightforward to see that a one-hot vector can be obtained in step (vii). However, since we have to approximate the discontinuous Heaviside function, in step (iv), the resulting vector $\tilde{I}$ obtained in step (vii) is different from one-hot. This error implies that several tokens may be chosen simultaneously. The error from the Heaviside is particularly considerable when the input is near zero. Also, this error compounds during text generation as explained in Section \ref{sec:TSP_reordering}. This motivates the use of TSP-based token reordering described in Section \ref{sec:TSP_reordering}.

\subsection{Theoretical analysis of our sampling method}
In this section, we establish error bounds for the approximate one-hot vectors and embedding representations, and analyze the effects of TSP token reordering and post-processing. To this end, we introduce the notion of a \textit{good event} and impose two natural assumptions, motivated by the approximation of the Heaviside function and \cite{top-p}, respectively.

\begin{assumption}\label{assumption:approx_heaviside}
The approximation $\widetilde{H}$ of the Heaviside function satisfies the following properties:
    \begin{enumerate}[label=\alph*)]
    \item $0\le \widetilde{H}(x)\le 1$ and $\widetilde{H}(x)=1-\widetilde{H}(1-x)$ $\forall x\in[-1, 1]$.
    \item $\text{Given }\varepsilon>0,~\exists\,\delta>0\text{ s.t. }x\in[-1,-\delta]\Rightarrow \widetilde{H}(x)\in[0,\varepsilon]\text{ and } x\in[\delta,1]\Rightarrow \widetilde{H}(x)\in[1-\varepsilon,1]$.
\end{enumerate}
\end{assumption}

The parameters $\varepsilon$ and $\delta$ capture the accuracy–margin trade-off of $\widetilde{H}$ and both of them are very small as in figure \ref{fig:heaviside-delta}. In what follows, we fix $\varepsilon$ together with its corresponding $\delta$.

\begin{definition}
    Given a probability vector $p$ and its cumulative probability $(s_i)_{i=0}^{|\mathcal{V}|-1}$ where $s_{-1}=0$ and $s_i=\sum_{j=0}^i p_j$, we define a \textit{good event} $\mathcal{G}_p$ as follows:
\[
    \mathcal{G}_p=\{r\in[0,1]\,|\,\exists\, k\text{ s.t. }s_{k-1}+\delta \le r < s_k-\delta\}
\]
and a bad event as $\mathcal{G}_p^c$.
\end{definition}
Finally, following \cite{top-p}, we assume that a small number of tokens account for the majority of the probability mass in next-token prediction.

\begin{assumption}\label{assumption:top-p}
The output distribution of the LLM is peaked, meaning the number of tokens that capture most of the probability mass is much smaller than the total vocabulary size. That is, $k_{\text{eff}} \ll |\mathcal{V}|$ tokens together capture a probability mass of at least $1-\varepsilon_{\text{tail}}$, for some small $\varepsilon_{\text{tail}} > 0$.
\end{assumption}

We are now ready to state our theoretical results, which provide upper bounds on the expected errors of our methods.

\begin{theorem}\label{thm:appr_error}
Let $P$ be a given probability vector and let $r\sim\mathrm{Uniform}([0,1])$. 
For the outputs $\tilde{I}$ and $\tilde{I}'$ of Algorithm \ref{alg:homomorphic_its}, their deviations from the one-hot vector $\mathbf{e}_{i(r)}$ satisfy the following inequalities:
\begin{align*}
\mathbb{E}_{r}\big[\|\tilde{I}-\mathbf{e}_{i(r)}\|_\infty\big] &\ \le \ 2\varepsilon + 2k_{\mathrm{eff}}\delta + \varepsilon_{\mathrm{tail}}, \\ \mathbb{E}_{r}\big[\|\tilde{I}'-\mathbf{e}_{i(r)}\|_\infty\big] &\ \le \ 12\varepsilon^2 + 2k_{\mathrm{eff}}\delta + \varepsilon_{\mathrm{tail}}.
\end{align*}
\end{theorem}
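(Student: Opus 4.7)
The plan is to split the expectation over the good event $\mathcal{G}_p$ and its complement. On $\mathcal{G}_p$ I would derive a deterministic pointwise bound on $\|\tilde{I}-\mathbf{e}_{i(r)}\|_\infty$ and $\|\tilde{I}'-\mathbf{e}_{i(r)}\|_\infty$ using Assumption~\ref{assumption:approx_heaviside}, and on $\mathcal{G}_p^c$ I would use the trivial bound by $1$: by Assumption~\ref{assumption:approx_heaviside}(a), every slot of $h$ lies in $[0,1]$, so every slot of $\tilde{I}=h\odot(1-\mathrm{Rot}(h))$ and of $\tilde{I}'=3\tilde{I}^2-2\tilde{I}^3$ lies in $[0,1]$, making the $\ell^\infty$ error at most $1$. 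The probability of $\mathcal{G}_p^c$ will then be controlled via Assumption~\ref{assumption:top-p}.

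Good-event analysis: on $\mathcal{G}_p$ there is a unique $k=i(r)$ with $s_{k-1}+\delta\le r<s_k-\delta$. For $i\ge k$ we have $s_i-r\ge\delta$, so Assumption~\ref{assumption:approx_heaviside}(b) gives $h_i=\widetilde{H}(s_i-r)\in[1-\varepsilon,1]$; for $i<k$ we have $s_i-r\le-\delta$, giving $h_i\in[0,\varepsilon]$. Writing $\tilde{I}_i=h_i(1-h_{i-1})$ (with the boundary convention of the algorithm), a case check produces $\tilde{I}_k\ge(1-\varepsilon)^2$ and $\tilde{I}_i\le\varepsilon$ for $i\ne k$, hence $\|\tilde{I}-\mathbf{e}_k\|_\infty\le 2\varepsilon-\varepsilon^2\le 2\varepsilon$. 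For the post-processed output I would use the factorization $1-(3x^2-2x^3)=(1-x)^2(1+2x)$, so $|1-\tilde{I}'_k|\le 3(1-\tilde{I}_k)^2\le 12\varepsilon^2$; and for $i\ne k$, $\tilde{I}'_i\le 3\tilde{I}_i^2\le 3\varepsilon^2$. Therefore $\|\tilde{I}'-\mathbf{e}_k\|_\infty\le 12\varepsilon^2$ on $\mathcal{G}_p$.

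Bad-event probability: since $r$ is uniform, $\Pr(\mathcal{G}_p^c)$ equals the Lebesgue measure of the union of the $\delta$-neighborhoods of the $s_k$'s intersected with $[0,1]$. Partitioning $[0,1]$ into the intervals $[s_{k-1},s_k)$, the bad portion in each has length at most $\min(p_k,2\delta)$, so $\Pr(\mathcal{G}_p^c)\le\sum_k\min(p_k,2\delta)$. By Assumption~\ref{assumption:top-p}, pick an effective set $S$ with $|S|=k_{\mathrm{eff}}$ and $\sum_{k\notin S}p_k\le\varepsilon_{\mathrm{tail}}$; bounding each term in $S$ by $2\delta$ and each term outside $S$ by $p_k$ yields $\Pr(\mathcal{G}_p^c)\le 2k_{\mathrm{eff}}\delta+\varepsilon_{\mathrm{tail}}$. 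Combining with the good-event bounds,
\[
\mathbb{E}_r[\|\tilde{I}-\mathbf{e}_{i(r)}\|_\infty]\le 2\varepsilon\cdot\Pr(\mathcal{G}_p)+1\cdot\Pr(\mathcal{G}_p^c)\le 2\varepsilon+2k_{\mathrm{eff}}\delta+\varepsilon_{\mathrm{tail}},
\]
and the analogous computation for $\tilde{I}'$ produces the $12\varepsilon^2$ bound.

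The main obstacle is controlling $\Pr(\mathcal{G}_p^c)$ without paying a $|\mathcal{V}|\,\delta$ factor, which would be vacuous for large vocabularies. This is exactly the role of Assumption~\ref{assumption:top-p}: it lets me split the sum so that heavy-mass tokens contribute only $2\delta$ each and light-mass tokens are bounded by their own probability rather than by $2\delta$, giving the desired dependence on $k_{\mathrm{eff}}$ instead of $|\mathcal{V}|$. The rest of the proof is essentially algebraic manipulation of the cubic $3x^2-2x^3$ and careful bookkeeping of the rotation step.
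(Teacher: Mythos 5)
Your proposal is correct and follows essentially the same route as the paper's proof: condition on the good event $\mathcal{G}_p$, bound the $\ell^\infty$ error there by a deterministic $2\varepsilon$ (resp.\ $12\varepsilon^2$ after post-processing) via the factorization $\tilde I_j = h_j(1-h_{j-1})$ with $h_j=\widetilde{H}(s_j-r)$, bound $\mathbb{P}(\mathcal{G}_p^c)\le 2k_{\mathrm{eff}}\delta+\varepsilon_{\mathrm{tail}}$ by a Head/Tail split using Assumption~\ref{assumption:top-p}, and combine with the trivial bound $1$ on the bad event. The only cosmetic differences are that you argue the good-event slot bounds by a direct case check on $h_i$ rather than invoking the symmetry $1-\widetilde H(x)=\widetilde H(-x)$, and you make explicit the factorization $1-(3x^2-2x^3)=(1-x)^2(1+2x)$ and the measure bound $\mathbb{P}(\mathcal{G}_p^c)\le\sum_k\min(p_k,2\delta)$ that the paper leaves implicit; these are pleasant clarifications, not a different argument.
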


Since $12\varepsilon^{2}$ is much smaller than $2\varepsilon$, 
the post-processing step effectively reduces the expected error.
\vspace{3mm}

\begin{proof}  
According to Assumption \ref{assumption:approx_heaviside}, $\widetilde{H}(x)$ satisfies the error bound $\varepsilon$ outside the $\delta$-neighborhood of its discontinuity. Let $\mathcal V$ be the vocabulary with distribution $p=(p_j)_{j=0}^{|\mathcal V|-1}$, and define the cumulative sums $s_i=\sum_{j=0}^{i} p_j$ for $i=0,\dots,|\mathcal V|-1$ and $s_{-1}=0$. The inverse-transform output $\mathbf{e}_{i(r)}$ is discontinuous at each threshold $r=s_i$.
For $j=i(r)$,
\begin{align*}
|\tilde{I}_j - 1 | & = |\widetilde{H}(s_j-r)(1-\widetilde{H}(s_{j-1}-r)-1| = |\widetilde{H}(s_j -r)\widetilde{H}(r-s_{j-1})-1| \\
& = 1-\widetilde{H}(s_j -r)\widetilde{H}(r-s_{j-1}) \le 1-(1-\varepsilon)^2=2\varepsilon-\varepsilon^2\\
& <2\varepsilon.
\end{align*}
Also, if $j \ne i(r)$, then $s_{j-1}-r\ge \delta$ or $s_j-r\le -\delta$. Therefore,
\begin{align*}
|\tilde{I}_j - 0 | & = \widetilde{H}(s_j -r)\widetilde{H}(r-s_{j-1})\le 1\cdot \varepsilon=\varepsilon.
\end{align*}
Hence we have $\Vert \tilde{I}-\mathbf{e}_{i(r)}\Vert_{\infty}\le 2\varepsilon$.

\noindent From Assumption \ref{assumption:top-p}, the next-token distribution $p$ is concentrated on a small head of size $k_{\mathrm{eff}}$ that carries at least $1-\varepsilon_{\mathrm{tail}}$ of the total mass. Let $H$ denote the set of indices of the top-$k_{\mathrm{eff}}$ tokens by probability (the \emph{Head}), and let $T := H^{\mathrm c}$ denote the remaining indices (the \emph{Tail}).  
Decomposing $\mathcal{G}_p^c$ into contributions from Head and Tail gives

\begin{align*}
\mathbb{P}[\mathcal{G}_{p}^{c}] & = \mathbb{P}\left[r\in\bigcup_{i=0}^{\left|V\right|-1}\left(\left[s_{i-1},s_{i}\right]-\left[s_{i-1}+\delta,s_{i}-\delta\right)\right)\right]\\
& = \mathbb{P}\Big[r\in\bigcup_{i\in H}([s_{i-1},s_{i}]-[s_{i-1}+\delta,s_{i}-\delta])\Big]+\mathbb{P}\Big[r\in\bigcup_{i\in T}([s_{i-1},s_{i}]-[s_{i-1}+\delta,s_{i}-\delta])\Big]\\
 & \leq \sum_{i\in H}2\delta+\sum_{i\in T}p_{i}=
  2k_{\text{eff}}\delta+\varepsilon_{\text{tail}}.
\end{align*}

Combining the fact that $\|\tilde{I}-\mathbf{e}_{i(r)}\|_\infty \le 1$ under $\mathcal{G}_p^c$ with the previous results, we obtain the following upper bound on the approximation error of $\tilde{I}$:

\begin{align*}
\mathbb{E}_r[\Vert \tilde{I}-\mathbf{e}_{i(r)}\Vert _{\infty}] &\ \le \ \mathbb{P}[\mathcal{G}_{p}]\cdot2\varepsilon + \mathbb{P}[\mathcal{G}_{p}^{c}]\cdot1 \ \le \ 2\varepsilon+2k_{\text{eff}}\delta+\varepsilon_{\text{tail}}.
\end{align*}

On the interval $[0,1]$, the post-processing function $2x^3 - 3x^2$ satisfies the following inequalities:

\begin{align*}
\left|(2x^{3}-3x^{2})-0\right| \leq 3x^{2},\qquad
\left|(2x^{3}-3x^{2})-1\right| \leq 3(x-1)^{2}.
\end{align*}
Therefore, for the post-processed $\tilde{I}'$ if $j=i(r)$, then
\[
    |\tilde{I}'_j - 1| = |\text{PP}(\tilde{I}_j)-1|\le 3(\tilde{I}_j -1)^2\le 12\varepsilon^2.
\]
If $j\ne i(r)$, then
\[
    |\tilde{I}'_j - 0| = |\text{PP}(\tilde{I}_j)|\le 3\tilde{I}_j^2 \le 3\varepsilon^2.
\]
Hence, the upper bound for $\Vert\tilde{I}'-\mathbf{e}_{i(r)}\Vert_{\infty}$ can be derived in exactly the same manner as for $\Vert\tilde{I}-\mathbf{e}_{i(r)}\Vert_{\infty}$, and is therefore omitted.
\end{proof}

Finally, we establish a theorem about the effect of TSP token reordering.

\begin{theorem}\label{thm:TSP_not_pp}
    \textbf{(Effect of TSP with compact support)} Suppose that a probability vector $p$ is given, $r\sim\mathrm{Uniform}([0,1])$, and its corresponding index is $i(r)$.
Let $\tilde I$ and $\tilde{I}'$ be the outputs of Algorithm~\ref{alg:homomorphic_its}, and assume there exists $R\in\mathbb N$ such that $\tilde I_j=0$ and $\tilde I'_j=0$ for $\lvert j-i(r)\rvert>R$.
For the embedding matrix $W\in\mathbb R^{\lvert\mathcal V\rvert\times d}$ and its normalized row vectors $\bar W_i:=W_i/\|W_i\|_2$, define
\[
\kappa_R \coloneqq \sup_{k} \max_{\lvert t\rvert\le R} \frac{\|W_{k+t}\|_2}{\|W_{k}\|_2},\ \bar d_t^{(p)}:=\sum_{i=0}^{\lvert\mathcal V\rvert-1}p_i\,d_{\cos}(\bar W_{i+t},\bar W_{i+t+1}), \text{ and } D_{R}^{(p)}\coloneqq\sum_{j=1}^{R}j\sum_{t=-j}^{\,j-1}\bar{d}_t^{(p)}.
\]
Then for $p_{b}:=\mathbb P(\mathcal G_p^c)$ and the target embedding vector $\bar{W}_{i(r)}$, the followings hold:

\begin{align*}
    \textit{If $(1-\varepsilon)^2\ge 2R\varepsilon\kappa_R$, } & \mathbb{E}_r \big[d_{\cos}(\tilde{I}^{\top}W, \bar{W}_{i(r)})\big] \le
(1-p_{\text{b}})\frac{\varepsilon\kappa_R}{(1-\varepsilon)^2}D_{R}^{(p)}
 + 2p_{\text{b}}. \\
\textit{If $1-3(2\varepsilon-\varepsilon^2)^2 \ge 6R\,\varepsilon^2\,\kappa_R$, } & \mathbb{E}_r \big[d_{\cos}(\tilde{I}'^{\top}W, \bar{W}_{i(r)})\big] \le
(1-p_{\text{b}})\frac{3\,\varepsilon^2\,\kappa_R}{\,1-3(2\varepsilon-\varepsilon^2)^2\,}D_{R}^{(p)}
 + 2p_{\text{b}}.
\end{align*}
\end{theorem}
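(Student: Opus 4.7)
The plan is to bound $d_{\cos}(\tilde I^\top W, \bar W_{i(r)})$ by conditioning on the good event $\mathcal{G}_p$ versus the bad event $\mathcal{G}_p^c$, using the trivial bound $d_{\cos}(\cdot,\cdot)\le 2$ on the bad event (which yields the $2p_b$ term) and a careful expansion on the good event. On $\mathcal{G}_p$, the proof of Theorem~\ref{thm:appr_error} already gives $a_0 := \tilde I_{i(r)} \ge (1-\varepsilon)^2$ and $0 \le a_t := \tilde I_{i(r)+t} \le \varepsilon$ for $t \ne 0$. Combined with the compact-support hypothesis, this restricts the relevant indices to $|t|\le R$.

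Next I would set $k=i(r)$, $\gamma_t=\|W_{k+t}\|_2/\|W_k\|_2$, and $d_t=d_{\cos}(W_{k+t},W_k)$, and compute the two key quantities for $u := \tilde I^\top W = \sum_{|t|\le R} a_t W_{k+t}$: the inner product $u\cdot W_k = \|W_k\|_2^2\bigl[a_0 + \sum_{t\ne 0} a_t \gamma_t(1-d_t)\bigr]$, and the triangle-inequality bound $\|u\|_2 \le \|W_k\|_2\bigl[a_0 + \sum_{t\ne 0} a_t \gamma_t\bigr]$, which is valid because $a_t\ge 0$. The hypothesis $(1-\varepsilon)^2 \ge 2R\varepsilon\kappa_R$ enters precisely to guarantee that $a_0 + \sum_{t\ne 0} a_t\gamma_t(1-d_t) \ge 0$, so that dividing the two yields
\[
d_{\cos}(u, W_k) \;\le\; \frac{\sum_{t\ne 0} a_t\gamma_t d_t}{a_0 + \sum_{t\ne 0} a_t \gamma_t} \;\le\; \frac{1}{a_0}\sum_{t\ne 0} a_t\gamma_t d_t \;\le\; \frac{\varepsilon\kappa_R}{(1-\varepsilon)^2}\sum_{0<|t|\le R} d_t.
\]

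The key geometric step is to convert each $d_t$ into a sum of adjacent-pair cosine distances. The chord metric $\rho(x,y):=\|\bar x-\bar y\|_2=\sqrt{2\,d_{\cos}(x,y)}$ satisfies the triangle inequality, so $\rho(W_{k+t},W_k)\le \sum_{s\in S_t}\rho(W_{k+s},W_{k+s+1})$ where $S_t$ is the $|t|$-element path from $0$ to $t$. Squaring both sides and invoking Cauchy--Schwarz yields the crucial linear bound $d_{\cos}(W_{k+t},W_k)\le |t|\sum_{s\in S_t} d_{\cos}(\bar W_{k+s},\bar W_{k+s+1})$, which is exactly what introduces the weights $j$ in the definition of $D_R^{(p)}$.

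Finally, I take the expectation over $r$. Because $\mathbb{P}(i(r)=i)=p_i$, we have $\mathbb{E}_r\bigl[d_{\cos}(\bar W_{i(r)+s},\bar W_{i(r)+s+1})\bigr]=\bar d_s^{(p)}$, and summing over $t$ with $0<|t|\le R$ and $s\in S_t$ reassembles $\sum_{j=1}^R j\sum_{t=-j}^{j-1}\bar d_t^{(p)}=D_R^{(p)}$ by a straightforward reindexing. Multiplying by $\mathbb{P}(\mathcal{G}_p)=1-p_b$ and adding the bad-event contribution $2p_b$ gives the first inequality. The second inequality follows by an essentially identical argument, replacing the bounds from Theorem~\ref{thm:appr_error} with the post-processed ones $\tilde I'_{i(r)}\ge 1-3(2\varepsilon-\varepsilon^2)^2$ and $\tilde I'_j\le 3\varepsilon^2$ for $j\ne i(r)$; the analogous condition on $R$ ensures the inner product stays non-negative. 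I expect the main obstacle to be the bookkeeping in the final reindexing step that produces $D_R^{(p)}$ with the correct $j$ weights, and to a lesser extent the careful verification that every inequality preserves the correct sign under the stated hypotheses.
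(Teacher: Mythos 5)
Your proposal is correct and follows essentially the same route as the paper's proof: conditioning on the good event $\mathcal{G}_p$, applying the slot-wise bounds from Assumption~\ref{assumption:approx_heaviside}, using the compact-support and $\kappa_R$ hypotheses to control the off-center weights, and converting each $d_{\cos}(\bar W_{k+t},\bar W_k)$ into a weighted sum of adjacent-pair cosine distances via the chord-triangle inequality before averaging over $i(r)\sim p$. The only cosmetic difference is that you work directly with $u=\tilde I^\top W$ and bound $\|u\|_2$ by the triangle inequality, whereas the paper first normalizes to a convex mixture $m=\sum w_j\bar W_j$ with $\sum w_j=1$ and then uses $d_{\cos}(m,\bar W_{i(r)})\le\sum_j w_j\,d_{\cos}(\bar W_j,\bar W_{i(r)})$; the two computations are algebraically identical, and your explicit Cauchy--Schwarz derivation of the chord-triangle inequality fills in a step the paper only states.
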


Intuitively, $\bar{d}_t^{(p)}$ is the weighted sum of the cosine distances between the adjacent embedding vectors. After applying TSP, the upper bound for the expected distance between the obtained and the target embedding vectors decreases.
\vspace{3mm}

\begin{proof}
We only prove the formal inequality; the proof for for the latter is similar. Decompose $\mathbb{E}_r \big[d_{\cos}(\tilde{I}^{\top}W, \bar{W}_{i(r)})\big]$ as
\[
\mathbb{E}_r \big[d_{\cos}(\tilde{I}^{\top}W, \bar{W}_{i(r)})\big]
= (1-p_{b})\,\mathbb{E}_r \big[d_{\cos}(\tilde{I}^{\top}W, \bar{W}_{i(r)})\,\big|\, \mathcal{G}_p\big]
+ p_{b}\,\mathbb{E}_r \big[d_{\cos}(\tilde{I}^{\top}W, \bar{W}_{i(r)})\,\big|\, \mathcal{G}_p^c\big].
\]
Since $d_{\cos}\in[0,2]$, the second term is bounded by $2p_{b}$. We work on $\mathcal G_p$.

\noindent For $r\in\mathcal G_p$, Assumption~1 on $\widetilde{H}$ gives
\begin{equation}\label{eq:I_bounds_patch2}
\tilde I_{i(r)}\ \ge\ (1-\varepsilon)^2,\qquad
\tilde I_j\ \le\ \varepsilon\ \ (j\ne i(r)),\qquad
\tilde I_j=0\ \ (\lvert j-i(r)\rvert>R).
\end{equation}

\noindent Rewrite $
\tilde{I}^{\top}W\ =\ \sum_{j}\tilde I_j W_j\ =\ \sum_{j}\tilde I_j\|W_j\|_2\,\bar W_j.
$
Let $S:=\sum_{j}\tilde I_j\|W_j\|_2$ and $w_j:=\tilde I_j\|W_j\|_2/S$. Then
\[
\sum_j w_j=1,\qquad w_j\ge0,\qquad  \text{and}\qquad  \tilde{I}^{\top}W\ = Sm
\]
where $m=\sum_{j} w_j\,\bar W_j$.
Therefore, $d_{\cos}(\tilde{I}^{\top}W,\bar W_{i(r)})=d_{\cos}(m,\bar W_{i(r)})$ by scale-invariance. Also $\|m\|_2\le 1$.
By \eqref{eq:I_bounds_patch2} and the definition of $\kappa_R$,
\[
S\ \ge\ \tilde I_{i(r)}\|W_{i(r)}\|_2\ \ge\ (1-\varepsilon)^2\|W_{i(r)}\|_2,\qquad
\|W_{i(r)\pm s}\|_2\ \le\ \kappa_R\,\|W_{i(r)}\|_2\ \ (1\le s\le R).
\]
Hence, for $1\le s\le R$,
\begin{equation}\label{eq:w_offk_patch2}
w_{i(r)\pm s}\ =\ \frac{\tilde I_{i(r)\pm s}\|W_{i(r)\pm s}\|_2}{S}\ \le\ \frac{\varepsilon\,\kappa_R}{(1-\varepsilon)^2},
\end{equation}
and by $\displaystyle S= \sum_{j=-R}^{R}\tilde{I}_j \|W_{i(r)}\|^2 \le  \tilde{I}_{i(r)}\| W_{i(r)}\|_2 + \sum_{0<|j|\le R}\!\!\!\varepsilon\kappa_R \|W_{i(r)}\|_2 = (\tilde{I}_{i(r)}+2R\varepsilon\kappa_R )\|W_{i(r)} \|_2$, we have
\begin{equation}\label{eq:wk_lower_patch2}
w_{i(r)}\ =\ \frac{\tilde I_{i(r)}\|W_{i(r)}\|_2}{S}\ \ge\ \frac{(1-\varepsilon)^2}{(1-\varepsilon)^2+2R\,\varepsilon\,\kappa_R}.
\end{equation}

\noindent The assumed condition $(1-\varepsilon)^2\ge 2R\,\varepsilon\,\kappa_R$ implies $w_{i(r)}\ge\tfrac12$ by \eqref{eq:wk_lower_patch2}.
Therefore $\langle m,\bar W_{i(r)}\rangle\ge w_{i(r)}-\sum_{i\ne k}w_i=2w_{i(r)}-1\ge 0$, and with $\|m\|_2\le 1$,
\begin{equation}\label{eq:mixture_patch2}
d_{\cos}(m,\bar W_{i(r)})\ =\ 1-\frac{\langle m,\bar W_{i(r)}\rangle}{\|m\|_2}
\ \le\ 1-\langle m,\bar W_{i(r)}\rangle
\ =\ \sum_{j} w_j\,d_{\cos}(\bar W_j,\bar W_{i(r)}).
\end{equation}
As $d_{\cos}(\bar W_{i(r)},\bar W_{i(r)})=0$ and $w_j=0$ for $\lvert j-i(r)\rvert>R$, \eqref{eq:w_offk_patch2} yields
\[
d_{\cos}(m,\bar W_{i(r)})\ \le\ \frac{\varepsilon\,\kappa_R}{(1-\varepsilon)^2}\sum_{s=1}^{R}\Big\{d_{\cos}(\bar W_{i(r)+s},\bar W_{i(r)})+d_{\cos}(\bar W_{i(r)-s},\bar W_{i(r)})\Big\}.
\]

\noindent For unit vectors $x_0,\dots,x_j$, the chord-triangle inequality gives
\[
d_{\cos}(x_j,x_0)\ \le\ j\sum_{t=0}^{j-1} d_{\cos}(x_{t+1},x_t).
\]
Applying this in both directions,
\[
d_{\cos}(\bar W_{i(r)\pm s},\bar W_{i(r)})\ \le\ s\sum_{t=0}^{s-1} d_{\cos}(\bar W_{i(r)\pm t\pm 1},\bar W_{i(r)\pm t}).
\]
Therefore, for fixed $r\in\mathcal G_p$,
\[
d_{\cos}(\tilde{I}^{\top}W,\bar W_{i(r)})\ \le\ \frac{\varepsilon\,\kappa_R}{(1-\varepsilon)^2}\sum_{s=1}^{R} s\sum_{t=0}^{s-1}\Big\{d_{\cos}(\bar W_{i(r)+t+1},\bar W_{i(r)+t})+d_{\cos}(\bar W_{i(r)-t-1},\bar W_{i(r)-t})\Big\}.
\]

\noindent On the good event, taking expectation over $r$ (so $k$ is distributed according to $p$), we get
\[
\mathbb E_r\!\big[d_{\cos}(\bar W_{i(r)+t+1},\bar W_{i(r)+t})\mid \mathcal G_p\big]
=\sum_{i}p_i\,d_{\cos}(\bar W_{i+t+1},\bar W_{i+t})
=\bar d_t^{(p)},
\]
and similarly for $-t-1$. Hence we have the following:
\[
\mathbb E_r\!\big[d_{\cos}(\tilde{I}^{\top}W,\bar W_{i(r)})\mid \mathcal G_p\big]
\ \le\ \frac{\varepsilon\,\kappa_R}{(1-\varepsilon)^2}\sum_{s=1}^{R} s\sum_{t=0}^{s-1}\Big(\bar d_t^{(p)}+\bar d_{-t-1}^{(p)}\Big)
\ =\ \frac{\varepsilon\,\kappa_R}{(1-\varepsilon)^2}\sum_{s=1}^{R} s\sum_{t=-s}^{\,s-1}\bar d_t^{(p)},
\]
which completes the proof.
\end{proof}

\newpage 

\section{Experiments}\label{sec: experimental results}

In this section, we explain our HE-based text generation scheme and experimental results.

\begin{center}
\begin{figure}[!htbp]
    \centering
    \includegraphics[width=0.95\textwidth]{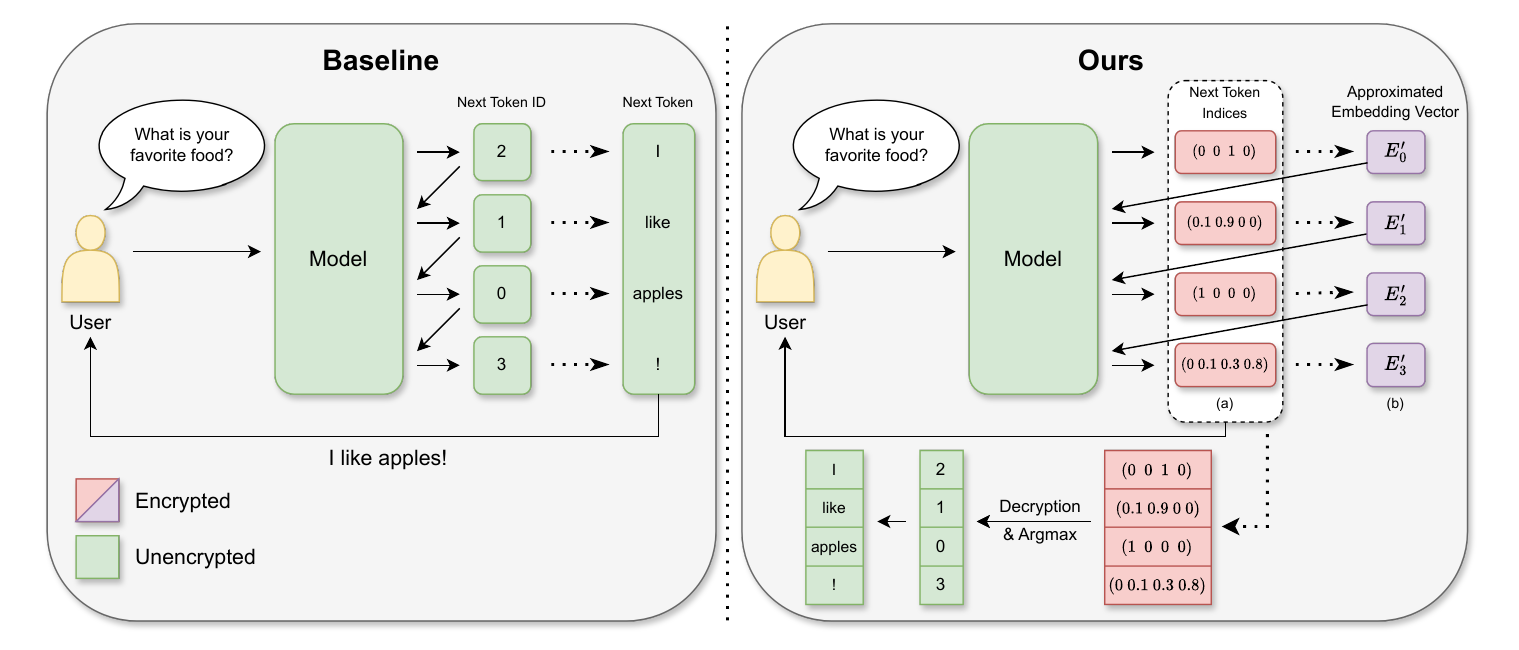}
    \caption{
    The schematic illustration of our work. \textbf{(Left)} Standard next token prediction. \textbf{(Right)} Our HE-based text generation. First, a user encrypts the embedding vectors of the prompt and the model performs the inference. However, since the output is a ciphertext, the model cannot select the index of the next-token. Instead, (a) the model approximates the one-hot vector of the next token (red), and (b) computes the approximated embedding vector (purple) and concatenates it into the model input. After generation, the model sends the concatenation of (a) to the user, who decrypts and applies argmax to recover to token. See Figure \ref{fig:tsp} and Section \ref{sec:TSP_reordering} for details on how (a) and (b) affect text generation.
    }
    \label{fig:scheme}
\end{figure}
\end{center}

\subsection{HE-based text generation}
We now explain text sampling under HE, illustrated in Figure~\ref{fig:scheme}. In standard next token prediction, a model predicts the indices of the next tokens. The user receives and decodes them into tokens. In contrast, under HE, we cannot use operations such as $\max$ or an if-statement to predict a particular token index. Therefore a model saves weighted indices in next token predictions and sends them to the user. Then the user decrypts the encrypted weighted indices and apply $\mathrm{argmax}$ decode to get the text.

We conduct our experiments under plaintext, not under HE, due to the limitations on computational budget. However, we anticipate that the result would not be significantly different under HE as can be found in the prior work \citep{lee2023hetal, rho2024encryption}.

\subsection{Criteria for text evaluation}
We experimentally show that our TSP-based token reordering helps an LLM generate higher-quality texts, and post-processing and domain-specific fine-tuning (refer to Section \ref{sec:domain_specific_fine_tuning}) further provide auxiliary improvements in generation quality. In our experiments, we use \texttt{Llama2-7b-hf} \citep{llama-2} to generate texts. First, we define what is a corrupted text and two metrics for evaluating the ratio of the corrupted text and how a generated text is coherent. In experiments, we observed that when the generation by the model breaks, the model generates the token `\texttt{MS}' repeatedly and meaninglessly and we say the text is \textit{corrupted}. The \textit{corruption ratio} is defined as the percentage of the corrupted texts among all generated texts. The \textit{corruption score} measures how many parts of a generated text is incoherent. We let GPT-4 \citep{gpt-4} API (\texttt{gpt-4-0613}) grade the score on each text according to the degree of the corruptness of the text. If 0$\sim$20\% of a text is incoherent (e.g., hard to understand or inconsistent), then the text is assigned a corruption score of 0. On the other hand, if 80$\sim$100\% of a text is incoherent, then its corruption score is 4. We use these scores as the metrics to evaluate the quality of generated texts. Exceptionally, if the model generates repeated `\texttt{MS}' tokens, we automatically give this text a score of 4. The prompt to grade the score can be seen in Table \ref{table:quality criteria}. Appendix \ref{sec:generated_results} presents examples of well and poorly written texts with the corruption score 0 and 4.

\begin{center}
\begin{figure}[!htbp]
    \centering
    \includegraphics[width=0.9\textwidth]{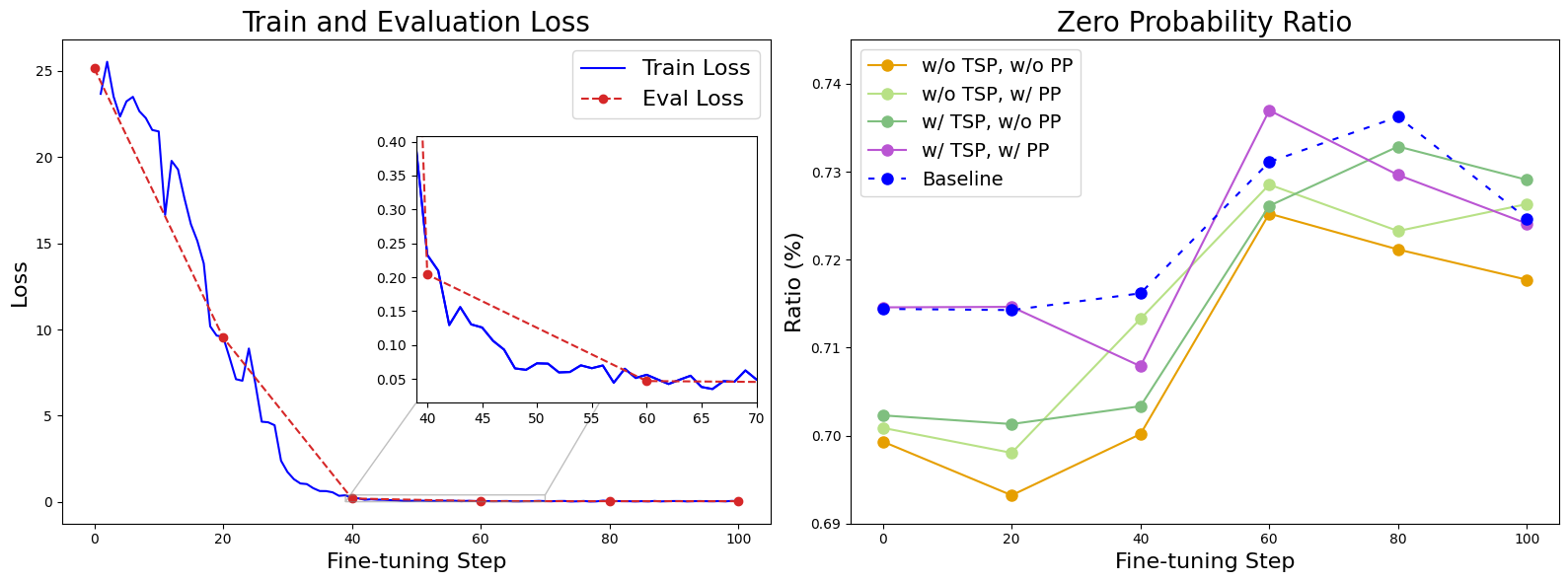}
    \caption{\textbf{(Left)} The train and evaluation loss and \textbf{(Right)} the ratios of zero-probability tokens during fine-tuning. After 60 steps, both the train and evaluation loss converge to zero, and the ratios of zero-probability tokens increase. In the right graph, we can see that the ratios of zero-probability tokens increase when TSP or post-processing (PP) is applied. Baseline means the case when the naive probabilistic sampling (not our algorithm) was used.}
    \label{fig:zero_probability_ratios}
\end{figure}
\end{center}

\subsection{Making generations more coherent via domain-specific fine-tuning}\label{sec:domain_specific_fine_tuning}
In this section, we explore how domain-specific fine-tuning improves the coherence of generated text under CKKS.
Because we use probabilistic sampling, the generated text has high diversity, but can lack coherence. We find that domain-specific fine-tuning can mitigate this diversity-coherence trade-off. As fine-tuning progresses, we observe that the number of zero-probability tokens increases---while Softmax is strictly positive, half or single-precision in LLM inference causes underflow to zero.
The right panel of Figure \ref{fig:zero_probability_ratios} shows this observation.

This makes our sampling algorithm behave similarly to top-$p/k$ because fine-tuning narrows the number of candidate tokens for sampling. This improves the balance between creativity and coherence, resulting in more coherent text generation.
Moreover, this is especially beneficial in our setting, since sampling methods using thresholds usually do not consider zero-probability tokens and their thresholds are not practical under CKKS.
Further explanation can be found in Section \ref{sec:experimental_results_sub}.

\subsection{Experimental results}\label{sec:experimental_results_sub}

We use \texttt{Llama2-7b-hf} to generate texts using our proposed TSP-based token reordering. We conduct experiments on four settings, depending on whether TSP-based token reordering and post-processing (PP) are applied. Also, we fine-tune the model using LoRA \citep{lora} with lora rank 2.
During fine-tuning, the batch size is 1, the gradient accumulation step is 32, the max sequence length is 4096, and the learning rate is $5\cdot 10^{-5}$. Decreasing the temperature $T$ increases the dominant probability of a softmax output so that one obtain a near one-hot vector for the highest-probability token. However, it widens the approximation domain of the exponential and division operations, making implementing softmax under HE significantly more costly. Therefore, in this work, we fix the temperature $T$ as 1.

For each generation, the model generates 1500 tokens, so even if the text is does not end with a complete sentence, this does not affect the evaluation of the text. The prompt for the model is ``Please introduce yourself.''

\begin{table}[ht]
\begin{center}
\scriptsize
\caption{\label{table:quality criteria}The criteria to measure the quality of the generated text using Algorithm \ref{alg:homomorphic_its}.}
\begin{tabular}{cc}  
\toprule
 Prompt & Please introduce yourself.\\ \cmidrule(lr){1-2}
 Criteria & 
 \begin{minipage}[t]{0.8\textwidth}\scriptsize
I’m going to give you a piece of writing. This text was generated by an LLM using random sampling. Please determine whether or not this text is corrupted. The criteria for being considered corrupted are as follows:
\vspace{3mm}

When a specific character is repeated meaninglessly. For example, something like Coooooooooooooool! has meaningful repetition, so it wouldn’t be considered corrupted. However, something like MSMSMSMSMS...—a meaningless sequence of repeated characters—would be considered corrupted.
\vspace{3mm}

When the arrangement of words is excessively random to the point where the text is completely unintelligible. Random sampling can result in some randomness in sentences, so a text with a reasonable degree of randomness wouldn’t be considered corrupted. However, if the randomness is excessive to the point where the text becomes utterly unreadable, it would be considered corrupted. However, since the current text was generated to match a specific token count, please disregard any incomplete sentences at the end.
\vspace{3mm}

After reading the text, assign a score based on the degree of corruption in the following format:  

**X point(s): {REASON}**
\vspace{3mm}

Here is the scoring system:  

4 points: If 80-100\% of the text is corrupted.  

3 points: If 60-80\% of the text is corrupted.  

2 points: If 40-60\% of the text is corrupted.  

1 point: If 20-40\% of the text is corrupted.  

0 points: If 0-20\% of the text is corrupted.  
\vspace{3mm}

**Special Case:** Regardless of the above criteria, if the sequence MS is repeated meaninglessly more than two times, assign **4 points**.
\vspace{3mm}

Here is the text I’ll show you:
 
 \end{minipage}
 \\
 \bottomrule

\end{tabular}
\end{center}
\end{table}

Figure \ref{fig:zero_probability_ratios} shows (left) the train and evaluation loss and (right) the ratio of zero-probability tokens during fine-tuning. As both the train and evaluation loss converge to zero after 60 steps, it is found that the ratio of zero-probability tokens increases in the right graph. One can see that TSP-based token reordering and post-processing also increases of zero-probability token ratio.

Table \ref{tab:random sampling with IC, PP, FT} and Figure \ref{fig:random_generation_score_ratio} present the corruption scores and the corruption ratios (\%) for each case during fine-tuning. Baseline means when the naive probabilistic sampling was used instead of our algorithm. We record the average of the results of three different seeds. In this table, one can see that TSP-based token reordering, post-processing (PP), and fine-tuning all contribute to reducing the corruption scores and the corruption ratios. First, PP helps the model generate a vector closer to one-hot vectors, and TSP makes adjacent embedding vectors semantically similar. In addition, if fine-tuning is done, the corruption score and ratio also decrease, suggesting that the model generate higher-quality texts. The corruption score can approaches to the baseline score when fine-tuning step is 60 and both TSP and PP are applied. And the corruption ratio mostly converges to zero when fine-tuning step is 60 or more. Also, the corruption score and ratio of the generated texts using our algorithm approach to the baseline results when all of our methods are applied. Note that two greenish graphs are not directly comparable as the applied methods for each case are different. Results for other prompts are provided in Appendix \ref{sec:other_prompts}, and the effect of our methods in each next-token prediction step can be seen in Appendix \ref{sec:ablation}.

\begin{table}[ht]
\begin{center}
\scriptsize
\caption{\label{tab:random sampling with IC, PP, FT}
The results of text generation using Algorithm \ref{alg:homomorphic_its}, with the TSP-based token reordering and post-processing (PP). Here \textit{score} and \textit{ratio} refer to the corruption score and ratio (\%), respectively. For each case, we record the average of the results of different three seeds. \textit{Baseline} means the probabilistic sampling, not our methods. Among the results using our methods, the best results are marked as \textbf{bold} and the second are \underline{underlined} for both metrics. For baseline, the lowest corruption score is marked as \textbf{bold}. We can observe that when the fine-tuning step reaches 60, the corruption score is lowest when both the TSP and PP are applied. Furthermore, as fine-tuning progresses, the corruption ratio mostly converges to zero.}
\begin{tabular}{ccccccccccc}
\toprule
 Prompt & \multicolumn{10}{c}{Please introduce yourself.}\\ \cmidrule(lr){1-11}
 TSP Reordering & \multicolumn{4}{c}{False} & \multicolumn{4}{c}{True} & \multicolumn{2}{c}{\multirow{2}{*}{Baseline}} \\ \cmidrule(lr){1-1}\cmidrule(lr){2-5}\cmidrule(lr){6-9} 
 Post-processing & \multicolumn{2}{c}{False} & \multicolumn{2}{c}{True} & \multicolumn{2}{c}{False} & \multicolumn{2}{c}{True} \\ \cmidrule(lr){1-1}\cmidrule(lr){2-5}\cmidrule(lr){6-9}\cmidrule(lr){10-11}
 Fine-tuning Step & Score & Ratio & Score & Ratio & Score & Ratio & Score & Ratio & Score & Ratio \\ \cmidrule(lr){1-1}\cmidrule(lr){2-2}\cmidrule(lr){3-3}\cmidrule(lr){4-4}\cmidrule(lr){5-5}\cmidrule(lr){6-6}\cmidrule(lr){7-7}\cmidrule(lr){8-8}\cmidrule(lr){9-9}\cmidrule(lr){10-10}\cmidrule(lr){11-11}
 0 & 1.5433 & 19.00 & 1.1867 & 12.00 & 1.2800 & 10.33 & 0.8967 & 8.33 & 0.6367 & 0.00 \\
 20 & 1.4033 & 12.00 & 1.0733 & 8.33 & 1.0133 & 6.33 & 0.8433 & 6.33 & 0.8267 & 0.00 \\
 40 & 1.2133 & 8.00 & 0.9067 & 5.33 & 1.0567 & 6.33 & 0.7600 & 3.00 & 0.5833 & 0.00 \\
 60 & 0.6500 & 0.67 & \underline{0.4967} & \underline{0.33} & 0.6567 & 2.00 & \textbf{0.4548} & \underline{0.33} & 0.4700 & 0.00 \\
 80 & 0.7267 & 2.33 & 0.5900 & 1.00 & 0.5300 & 0.67 & \underline{0.4967} & \underline{0.33} & \textbf{0.4367} & 0.00 \\
 100 & 0.8500 & 2.00 & 0.6567 & 0.67 & 0.5367 & 1.00 & 0.5900 & \textbf{0.00} & 0.4667 & 0.00 \\
 \bottomrule

\end{tabular}
\end{center}
\end{table}

\begin{figure}[!htbp]
    \centering
    \includegraphics[width=0.9\textwidth]{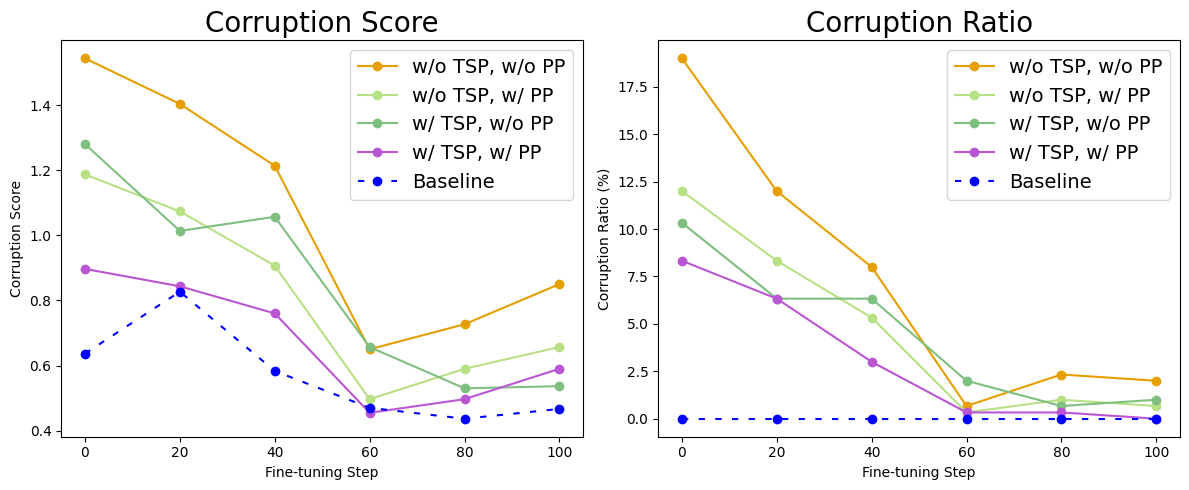}
    \caption{\textbf{(Left)} The corruption scores and \textbf{(Right)} the ratios over 100 generated texts for each case. The corruption score is minimized when the fine-tuning step is 60 and both TSP and PP are applied and the corruption ratio mostly converges to zero when all of our methods are combined. We can observe that the quality of generated texts by our algorithm is comparable to that of the baseline.}
    \label{fig:random_generation_score_ratio}
\end{figure}

\section{Conclusion}
In this work, we tackled the challenges of encrypted text generation by introducing TSP-based token reordering and a post-processing step. Theoretical and experimental results demonstrate the effectiveness of our approach in generating high-quality text.

Our work leads to several possible directions of future work. One is designing HE-friendly sampling methods using thresholding. Another is HE encryption to other architectures, such as diffusion language models. In all such directions, we believe that our proposed method can be utilized to enable high-quality text generation under encryption.

\bibliography{bib}

\appendix

\section{SIMD capabilities in FHE system}\label{sec:SIMD_FHE}

Fully Homomorphic Encryption (FHE) is a foundational cryptographic
primitive that enables computations on encrypted data without requiring
decryption. FHE supports various operations, including addition, multiplication,
and rotation. Both addition and multiplication can be applied to scalar
and non-scalar values, and are executed in a slot-wise manner, while
the rotation operation performs a cyclic shift on the data elements
within the slots. Notably, the FHE system packs $N$ data elements
into a single ciphertext, allowing computations to be performed on
this packed data. Consequently, a single ciphertext operation can
process $N$ data elements in parallel, enabling efficient SIMD (Single
Instruction, Multiple Data) operations. The following example demonstrates
a single ciphertext multiplication leading to the simultaneous computation
of the products of eight data elements. This product is the Hadamard
product, denoted as $\odot$.
\begin{center}
\includegraphics[scale=0.3]{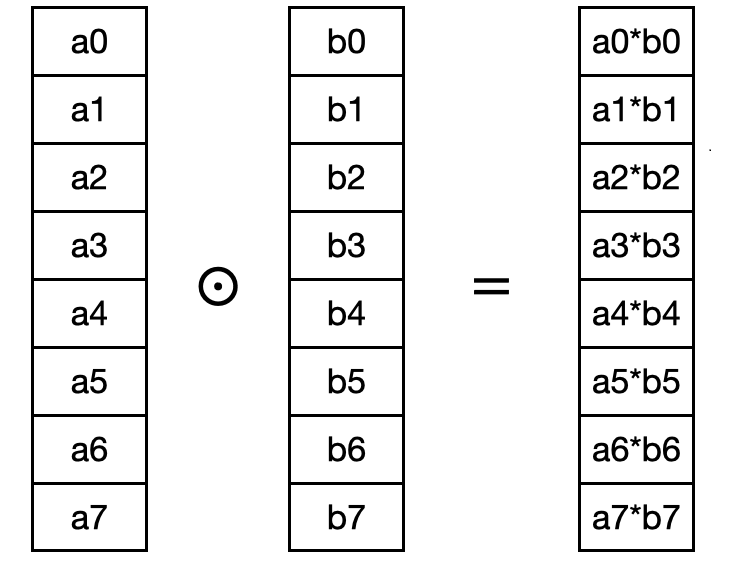}
\par\end{center}

\section{FHE security constraints on the use of divide-and-conquer}\label{divide_and_conquer_FHE}

The number of data ele\textcolor{black}{ments packed into a polynomial,
denoted as $N$, cannot be arbitrarily changed because $N$ plays
a critical role in the security of FHE schemes, particularly those
based on the Learning With Errors (LWE) problem \citep{chillotti2016faster,ducas2015fhew}
and the Ring Learning With Errors (RLWE) problem \citep{brakerski2012fully,brakerski2014leveled,cheon2017homomorphic,cheon2019full,fan2012somewhat}.
Reducing $N$ results in a smaller modulus $Q$, which makes the underlying
lattice problem easier to solve and increases susceptibility to attacks,
such as those using the LLL (Lenstra-Lenstra-Lovász) \citep{lenstra1982factoring}
algorithm. Therefore, the divide-and-conquer method, which reduces
the number of slots $N$ to improve efficiency, as shown on the
right side of Figure \ref{fig:reduce N}, cannot be applied in FHE.
While this method would enhance the efficiency of comparison operations,
reducing $N$ compromises the security of the FHE scheme and the integrity
of the system.}

\begin{figure}[!htbp]
\begin{centering}
\includegraphics[scale=0.3]{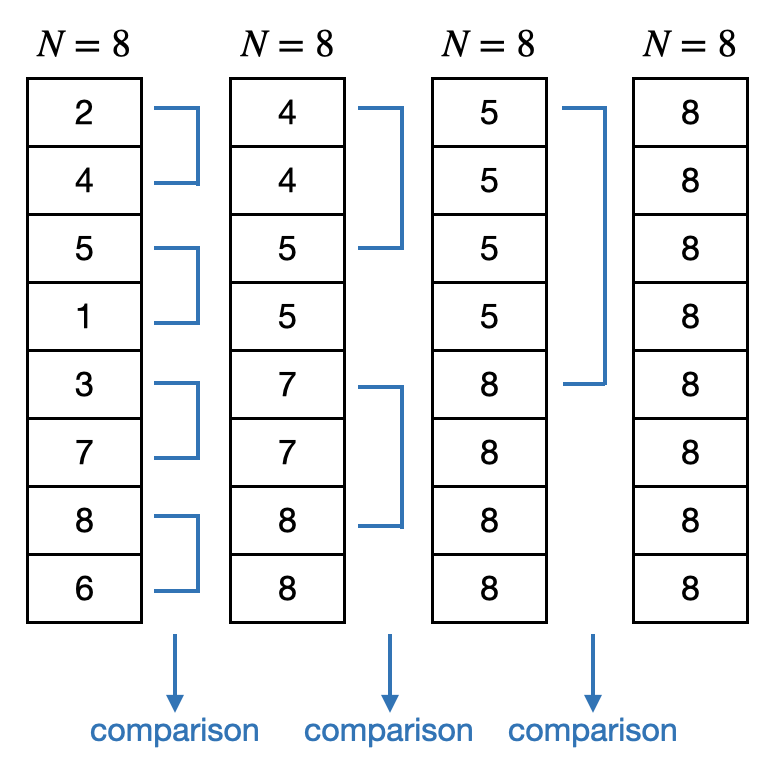} $\qquad$ \includegraphics[scale=0.3]{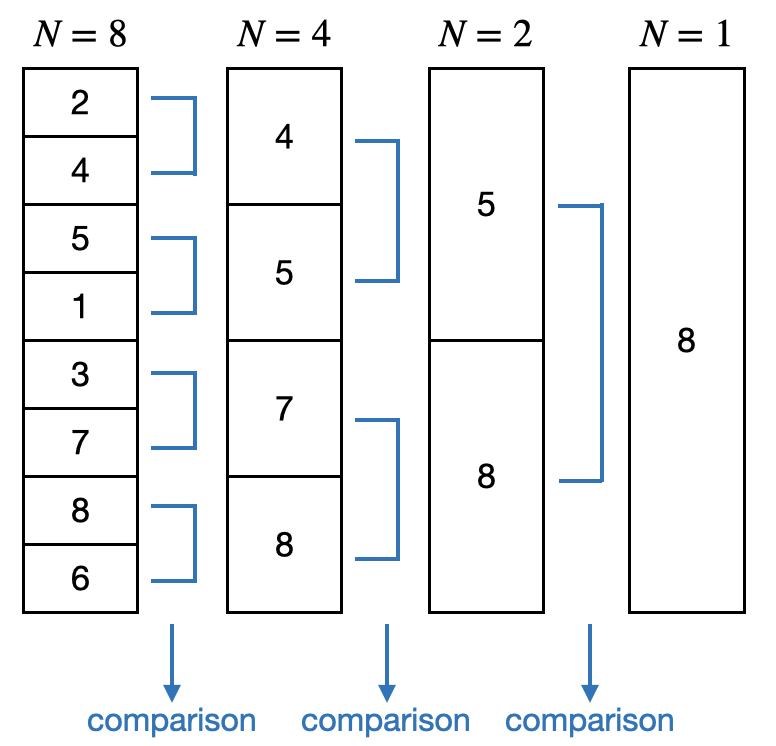}
\par\end{centering}
\caption{\label{fig:reduce N}(\textbf{Left}) The process of calculating the maximum value from $N = 8$ inputs using a full sequence of secure comparisons in FHE. Each step compares adjacent pairs, halving the number of candidates until the maximum is found. (\textbf{Right}) An alternative strategy that reduces $N$ before applying secure comparison. While this approach reduces computational cost, it compromises FHE security by exposing intermediate results, making it infeasible in secure settings.}
\end{figure}

\section{ Analysis of an approximated Heaviside function}\label{sec:approximation_heaviside}

In Step 7 of Algorithm \ref{alg:homomorphic_its}, the Heaviside function must be approximated by a polynomial before computation, since it is a non-polynomial operation. This approximation inevitably introduces an error. In this section, we characterize the quality of this approximation using two parameters, $\varepsilon$ and $\delta$.

We use the composition-based polynomial approximation of \citep{cheon2020efficient}. For \(i,j\in\{1,2,3,4\}\) and integers \(n_f,n_g\ge 1\), define
\begin{equation}
\widetilde{H}(x)=\frac{\left(f_i^{(n_f)}\!\circ g_j^{(n_g)}\right)(x)+1}{2},
\label{eq:Htilde}
\end{equation}
where \(f_i^{(n_f)}\) and \(g_j^{(n_g)}\) denote the \(n_f\)- and \(n_g\)-fold self-compositions; explicit forms for \(f_i\) and \(g_j\) appear in \citep{cheon2020efficient}. Since \(\deg(f_i)=2i+1\), \(\deg(g_j)=2j+1\), and \(\deg(f\circ g)=\deg f\cdot \deg g\), the degree of \(\widetilde{H}(x)\) is \((2i+1)^{n_f}(2j+1)^{n_g}\).

\begin{figure}[!htbp]
\begin{centering}
\includegraphics[scale=0.4]{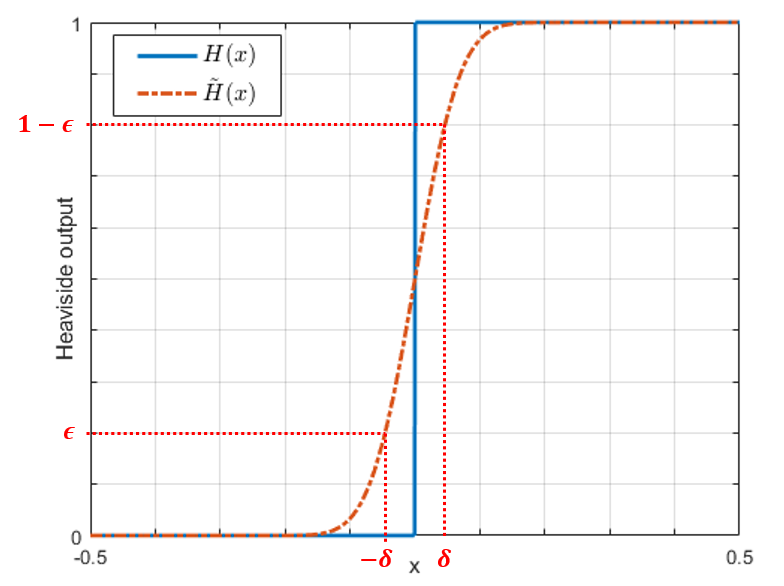} 
\par\end{centering}
\caption{\label{fig:Graphs-of-Heaviside}
Polynomial approximation $\widetilde{H}(x)$ of the Heaviside function. 
The approximation is described by two parameters $\varepsilon$ and $\delta$, 
such that $|\widetilde{H}(x)-H(x)| < \varepsilon$ whenever $|x| > \delta$.}
\end{figure}

Figure \ref{fig:Graphs-of-Heaviside} depicts both $H(x)$ and $\widetilde{H}(x)$. As shown in the figure, $H(x)$ is discontinuous, whereas $\widetilde{H}(x)$ is smooth. Because of this smoothness, the approximation error grows as the input approaches zero. Accordingly, we quantify the accuracy of $\widetilde{H}(x)$ as an approximation to $H(x)$ using the parameters $\varepsilon$ and $\delta$. As illustrated in Figure~\ref{fig:Graphs-of-Heaviside}, the error bound $\varepsilon$ holds outside the $\delta$-neighborhood of the discontinuity at 0. 

\begin{equation}
\left|
\widetilde{H}(x)-H(x)\right|<\epsilon,
\textrm{ if } x \notin [-\delta,\delta]
\label{eq_Htilde_eps_delta}
\end{equation}

Since the approximation is designed to preserve the monotonicity of $H(x)$, increasing the degree of $\widetilde{H}(x)$ reduces both $\varepsilon$ and $\delta$, which converge to zero as shown in Figure~\ref{fig:heaviside-delta}. However, employing high-degree approximations consumes substantial multiplicative depth in homomorphic evaluation, thereby inducing a trade-off between accuracy and efficiency. Consequently, the choice of degree should be made carefully depending on the user’s requirements.

\begin{figure}[!htbp]
\begin{centering}\includegraphics[scale=0.6]{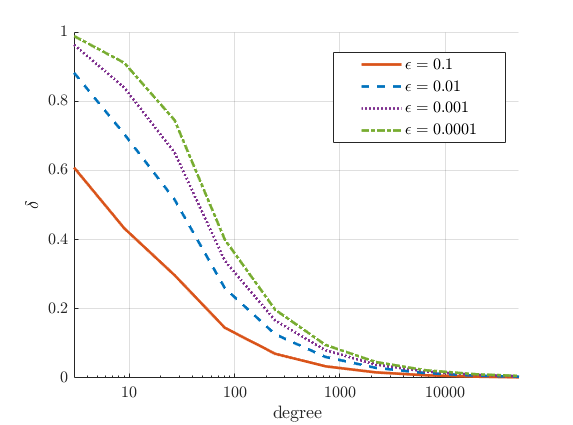}
\par\end{centering}
\caption{\label{fig:heaviside-delta}
As the degree of $\widetilde{H}(x)$ increases, the approximation parameters $\varepsilon$ and $\delta$ decrease and converge to $0$.}
\end{figure}

\section{Necessity for post-processing}\label{sec:post_processing_necessity}

Let us explain about the necessity of post-processing in more detail. Suppose that we are given a probability vector $p=[p_0,\ldots,p_{|\mathcal{V}|-1}]$, $r\in[0,1]$ which is chosen randomly as in Section \ref{sec:homomorphic inverse transform sampling}, and its corresponding index $i(r)$. Let the cumulative distribution vector $F$ computed from $p$ be $F=[s_0,\ldots,s_{|\mathcal{V}|-1}]$ where $s_k=\sum_{i=0}^{k}p_i$ with $s_{-1}=0$ and the approximated Heaviside function be $\widetilde{H}$. We aim to obtain $\tilde{I}\approx \mathbf{e}_{i(r)}$. If $s_{i(r)-1}\ll r\ll s_{i(r)}$, then we the desired result can be obtained via

    \[
    \begin{array}{ccc}
    h=\widetilde{H}(F-r) & \approx & (\begin{smallmatrix} 0 & 0 & \cdots & \overset{i(r)-1\text{-th}}{0} & \overset{i(r)\text{-th}}{1} & \overset{i(r)+1\text{-th}}{1} & \cdots & 1 \end{smallmatrix}) \\
    1-\mathrm{Rot}(h) & \approx & (\begin{smallmatrix} 0 & 1 & \cdots & \overset{i(r)-1\text{-th}}{1} & \overset{i(r)\text{-th}}{1} & \overset{i(r)+1\text{-th}}{0} & \cdots & 0 \end{smallmatrix}) \\
    \tilde{I}=h \odot (1-\mathrm{Rot}(h)) & \approx & (\begin{smallmatrix} 0 & 0 & \cdots & \overset{i(r)-1\text{-th}}{0} & \overset{i(r)\text{-th}}{1} & \overset{i(r)+1\text{-th}}{0} & \cdots & 0 \end{smallmatrix})
    \end{array}.
    \]

However, there are cases where the sum of indices is different from 1. We present these cases.
\begin{enumerate}[label=\roman*)]
    \item Consider the case $s_{i(r)-n-1}\ll s_{i(r)-n}\approx\cdots s_{i(r)-1}\approx r \approx s_{i(r)}\approx \cdots s_{i(r)+m}\ll s_{k+m+1}$ with $n+m\ge 1$. In this scenario, we obtain

    \begin{align*}
    h=\widetilde{H}(F-r) & \approx  (\begin{smallmatrix} 0 & 0 & \cdots & \overset{i(r)-n-1\text{-th}}{0} & \overset{i(r)-n\text{-th}}{\frac{1}{2}} & \overset{i(r)-n+1\text{-th}}{\frac{1}{2}} & \cdots & \overset{i(r)+m\text{-th}}{\frac{1}{2}} & \overset{i(r)+m+1\text{-th}}{1} & 1 & \cdots & 1 \end{smallmatrix}) \\
    1-\mathrm{Rot}(h) & \approx (\begin{smallmatrix} 0 & 1 & \cdots & \overset{i(r)-n-1\text{-th}}{1} & \overset{i(r)-n\text{-th}}{1} & \overset{i(r)-n+1\text{-th}}{\frac{1}{2}} & \cdots & \overset{i(r)+m\text{-th}}{\frac{1}{2}} & \overset{i(r)+m+1\text{-th}}{\frac{1}{2}} & 0 & \cdots & 0 \end{smallmatrix}) \\
    \tilde{I}=h \odot (1-\mathrm{Rot}(h)) & \approx (\begin{smallmatrix} 0 & 0 & \cdots & \overset{i(r)-n-1\text{th}}{0} & \overset{i(r)-n\text{-th}}{\frac{1}{2}} & \overset{i(r)-n+1\text{-th}}{\frac{1}{4}} & \cdots & \overset{i(r)+m\text{-th}}{\frac{1}{4}} & \overset{i(r)+m+1\text{-th}}{\frac{1}{2}} & 0 & \cdots & 0 \end{smallmatrix}).
    \end{align*}

    Therefore, the sum of all elements of $\tilde{I}$ exceeds 1, and there are $n+m$ instances of $\frac{1}{4}$.
    \item When $r\ll s_0$. In this case we get
    \[
    \begin{array}{ccc}
    h=\widetilde{H}(F-r) & \approx & (\begin{smallmatrix} 1 & \cdots & 1 \end{smallmatrix}) \\
    1-\mathrm{Rot}(h) & \approx & (\begin{smallmatrix} 0 & \cdots & 0 \end{smallmatrix}) \\
    \tilde{I}=h \odot (1-\mathrm{Rot}(h)) & \approx & (\begin{smallmatrix} 0 & \cdots & 0 \end{smallmatrix})
    \end{array}.
    \]
    Therefore $\tilde{I}\approx\mathbf{0}$. This is problematic since the sum of elements of $\tilde{I}$ is approximately zero.
    \item When $r\approx s_0\approx\cdots\approx s_m\ll s_{m+1}$ where $p\ge 0$.

    We have
    \[
    \begin{array}{ccc}
    h=\widetilde{H}(F-r) & \approx & (\begin{smallmatrix} \frac{1}{2} & \frac{1}{2} & \cdots & \overset{m\text{-th}}{\frac{1}{2}} & \overset{m+1\text{-th}}{1} & 1 & \cdots & 1 \end{smallmatrix}) \\
    1-\mathrm{Rot}(h) & \approx & (\begin{smallmatrix} 0 & \frac{1}{2} & \cdots & \overset{m\text{-th}}{\frac{1}{2}} & \overset{m+1\text{-th}}{\frac{1}{2}} & 0 & \cdots & 0 \end{smallmatrix}) \\
    \tilde{I}=h \odot (1-\mathrm{Rot}(h)) & \approx & (\begin{smallmatrix} 0 & \frac{1}{4} & \cdots & \overset{m\text{-th}}{\frac{1}{4}} & \overset{m+1\text{-th}}{\frac{1}{2}} & 0 & \cdots & 0 \end{smallmatrix})
    \end{array}.
    \]
    Therefore we get $m-1$ intances of $\frac{1}{4}$ and one $\frac{1}{2}$. This is problematic since
    \begin{itemize}
        \item The dominating $\frac{1}{2}$ is not the first element of $\tilde{I}$.
        \item The sum of elements of $\tilde{I}$ can exceed 1.
    \end{itemize}
    \item When $s_{|\mathcal{V}|-n-1}\ll s_{|\mathcal{V}|-n}\approx\cdots\approx s_{|\mathcal{V}|-1}\approx r$ where $n\ge 0$. For the same reason as in d), we get $n-1$ instances of $\frac{1}{4}$ and one $\frac{1}{2}$.
\end{enumerate}
 In practice, many values smaller than $\frac{1}{4}$ appear as shown in Table \ref{tab:weighted_sum_can_exceed_1}, and in many cases the sum of all elements of $\tilde{I}$ exceeds 1. These issues arise mainly from two factors: (i) the discontinuous Heaviside function is approximated by $h$ with $h(0)=\frac{1}{2}$, and (ii) a language model assigns zero probability to many tokens during next token prediction. Values near $\frac{1}{4}$ (or smaller) are the primary cause of problematic $\tilde{I}$, and they can be reduces close to zero by our post-processing.

\section{Effect of TSP and PP in each next-token prediction}\label{sec:ablation}
\begin{figure}[h]
    \centering
    \includegraphics[width=0.9\textwidth]{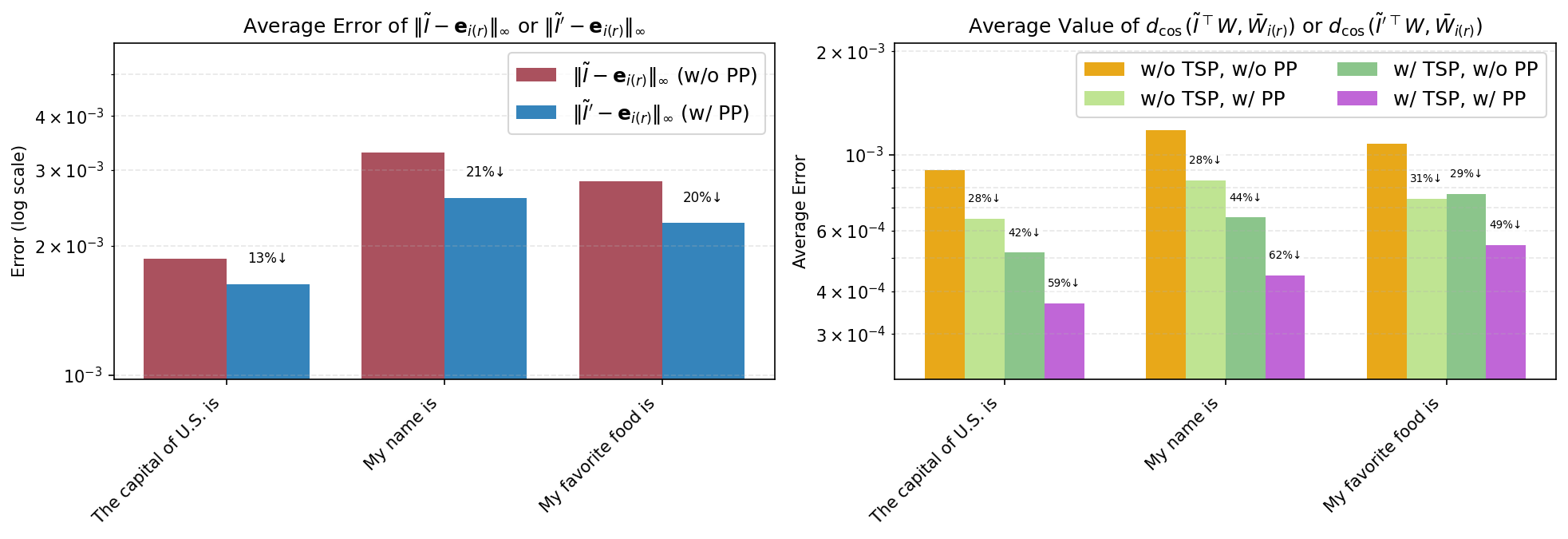}
    \caption{Errors at each next-token prediction step. The model receives a prompt and outputs a probability vector for next-token prediction. \textbf{(Left)} The average error of $\tilde{I}$ and $\tilde{I}'$ at steps 8 and 9 in Algorithm \ref{alg:homomorphic_its}. \textbf{(Right)} The average cosine distance between the approximated and target embedding vector. For each prompt, the errors were computed over 10,000 runs.}
    \label{fig:empirical_ablation}
\end{figure}

In this section, we present illustrative examples showing how our TSP and PP reduce error at each next-token prediction step. Given a prompt, the model outputs a probability vector for next-token. We then run Algorithm \ref{alg:homomorphic_its} to obtain the approximated one-hot vectors $\tilde{I}$ (or $\tilde{I}'$) and the approximated embedding vectors $\tilde{I}^{\top}W$ or $\tilde{I}'^{\top}W$. We compute the average error and distance of these quantities from the targets, and results are shown in Figure \ref{fig:empirical_ablation}. From the left graph, the average error of the approximated one-hot vector decreases by about 10--20\%. In the right graph, the average cosine distance between the approximated and target embedding vector decreases when either TSP or PP is applied; when they are combined, the reduction is around 50--60\%. These supports the experimental results in Section \ref{sec: experimental results}.

\section{Results for other prompts}\label{sec:other_prompts}

In this section, we present the more precise experimental results including the standard deviation and additional experiment results using prompts different from the one used in Section \ref{sec: experimental results}.

\begin{table}[ht]
\begin{center}
\scriptsize
\caption{\label{tab:reuslt_overcome}
The results of text generation using our proposed algorithm, with the prompt ``Tell me about a time you overcame a challenge." Among the results using our methods, the best results are marked in \textbf{bold} and the second are \underline{underlined} for both metrics. For baseline, the lowest corruption score is marked as \textbf{bold}.}
\begin{tabular}{ccccccccccc} 
\toprule
 Prompt & \multicolumn{10}{c}{Tell me about a time you overcame a challenge.}\\ \cmidrule(lr){1-11}
 TSP Reordering & \multicolumn{4}{c}{False} & \multicolumn{4}{c}{True} & \multicolumn{2}{c}{\multirow{2}{*}{Baseline}} \\ \cmidrule(lr){1-1}\cmidrule(lr){2-5}\cmidrule(lr){6-9} 
 Post-processing & \multicolumn{2}{c}{False} & \multicolumn{2}{c}{True} & \multicolumn{2}{c}{False} & \multicolumn{2}{c}{True} \\ \cmidrule(lr){1-1}\cmidrule(lr){2-5}\cmidrule(lr){6-9}\cmidrule(lr){10-11}
 Fine-tuning Step & Score & Ratio & Score & Ratio & Score & Ratio & Score & Ratio & Score & Ratio \\ \cmidrule(lr){1-1}\cmidrule(lr){2-2}\cmidrule(lr){3-3}\cmidrule(lr){4-4}\cmidrule(lr){5-5}\cmidrule(lr){6-6}\cmidrule(lr){7-7}\cmidrule(lr){8-8}\cmidrule(lr){9-9}\cmidrule(lr){10-10}\cmidrule(lr){11-11}
 0 & 2.0100 & 19.00 & 1.1867 & 7.67 & 1.2900 & 8.33 & 1.2200 & 6.67 & 0.7400 & 0.00 \\
 20 & 1.6967 & 12.33 & 1.3467 & 6.33 & 1.4933 & 13.67 & 1.1467 & 6.67 & 0.9967 & 0.00 \\
 40 & 1.5733 & 11.00 & 1.2100 & 5.33 & 1.0933 & 4.67 & 1.0867 & 2.67 & 0.7733 & 0.00 \\
 60 & 0.8433 & 1.33 & 0.6533 & \underline{0.67} & 0.8600 & 1.33 & \textbf{0.5733} & \textbf{0.33} & 0.5700 & 0.00 \\
 80 & 0.7100 & 2.00 & 0.6933 & 1.67 & 0.7200 & 1.33 & 0.6067 & \textbf{0.33} & 0.5467 & 0.00 \\
 100 & 0.8900 & \textbf{0.33} & 0.6600 & 1.67 & \underline{0.5800} & \textbf{0.33} & 0.6200 & \textbf{0.33} & \textbf{0.4567} & 0.00 \\
 \bottomrule

\end{tabular}
\end{center}
\end{table}

\begin{figure}[t]
    \centering
    \includegraphics[width=0.99\textwidth]{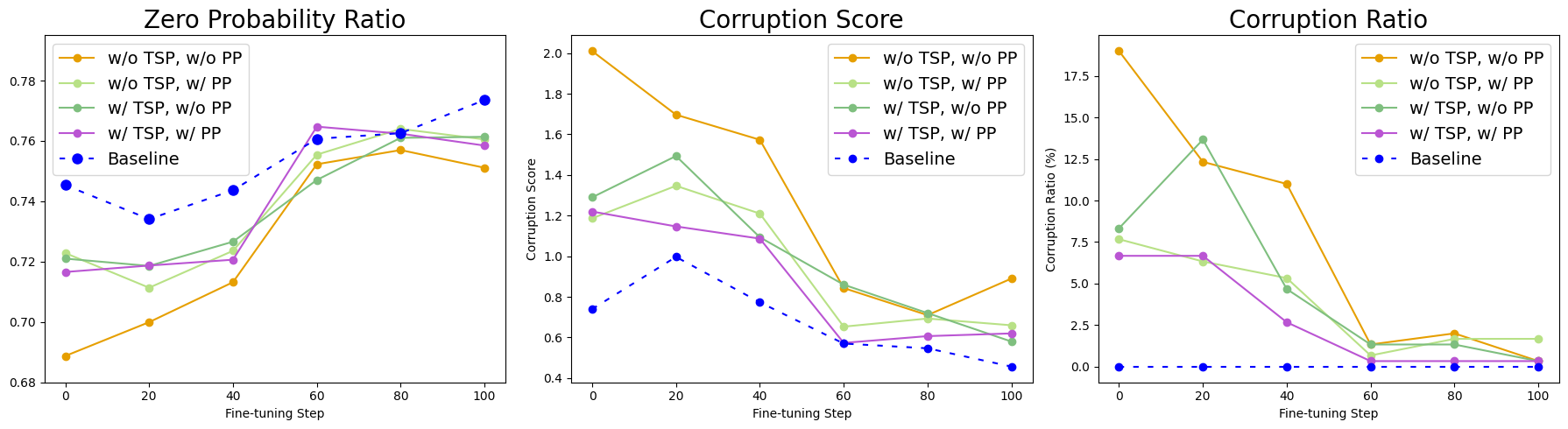}
    \caption{
    \textbf{(Left)} The ratios of zero-probability tokens during fine-tuning,
    \textbf{(Middle)} the average corruption scores, and \textbf{(Right)} the ratios over 100 generated texts for each case for the prompt ``Tell me about a time you overcame a challenge." 
    }
    \label{fig:result_overcome}
\end{figure}

\begin{table}[ht]
\begin{center}
\scriptsize
\caption{\label{tab:reuslt_descibe}
The results of text generation using our proposed algorithm, with the prompt ``Describe the U.S. high school math curriculum." Among the results using our methods, the best results are marked in \textbf{bold} and the second are \underline{underlined} for both metrics. For baseline, the lowest corruption score is marked as \textbf{bold}.}
\begin{tabular}{ccccccccccc} 
\toprule
 Prompt & \multicolumn{10}{c}{Describe the U.S. high school math curriculum.}\\ \cmidrule(lr){1-11}
 TSP Reordering & \multicolumn{4}{c}{False} & \multicolumn{4}{c}{True} & \multicolumn{2}{c}{\multirow{2}{*}{Baseline}} \\ \cmidrule(lr){1-1}\cmidrule(lr){2-5}\cmidrule(lr){6-9} 
 Post-processing & \multicolumn{2}{c}{False} & \multicolumn{2}{c}{True} & \multicolumn{2}{c}{False} & \multicolumn{2}{c}{True} \\ \cmidrule(lr){1-1}\cmidrule(lr){2-5}\cmidrule(lr){6-9}\cmidrule(lr){10-11}
 Fine-tuning Step & Score & Ratio & Score & Ratio & Score & Ratio & Score & Ratio & Score & Ratio \\ \cmidrule(lr){1-1}\cmidrule(lr){2-2}\cmidrule(lr){3-3}\cmidrule(lr){4-4}\cmidrule(lr){5-5}\cmidrule(lr){6-6}\cmidrule(lr){7-7}\cmidrule(lr){8-8}\cmidrule(lr){9-9}\cmidrule(lr){10-10}\cmidrule(lr){11-11}
 0 & 2.0267 & 23.00 & 1.3633 & 8.33 & 1.4967 & 13.33 & 1.1433 & 5.67 & 0.9067 & 0.00 \\
 20 & 1.8067 & 15.67 & 1.2933 & 7.67 & 1.3867 & 8.00 & 1.2600 & 3.67 & 0.9367 & 0.00 \\
 40 & 1.4400 & 7.33 & 1.0233 & 2.67 & 1.3033 & 7.00 & 1.1067 & 3.67 & 0.8533 & 0.00 \\
 60 & 0.7833 & \underline{1.00} & 0.9567 & 1.67 & 0.7867 & 2.33 & \underline{0.7200} & 1.67 & 0.6867 & 0.00 \\
 80 & 0.8467 & 2.00 & 0.7533 & \underline{1.00} & 0.8200 & \underline{1.00} & \textbf{0.6367} & \underline{1.00} & 0.6633 & 0.00 \\
 100 & 0.8733 & 2.00 & 0.7300 & 1.33 & 0.8967 & 2.67 & 0.8000 & \textbf{0.33} & \textbf{0.5700} & 0.00 \\
 \bottomrule

\end{tabular}
\end{center}
\end{table}

\begin{figure}[H]
    \centering
    \includegraphics[width=0.99\textwidth]{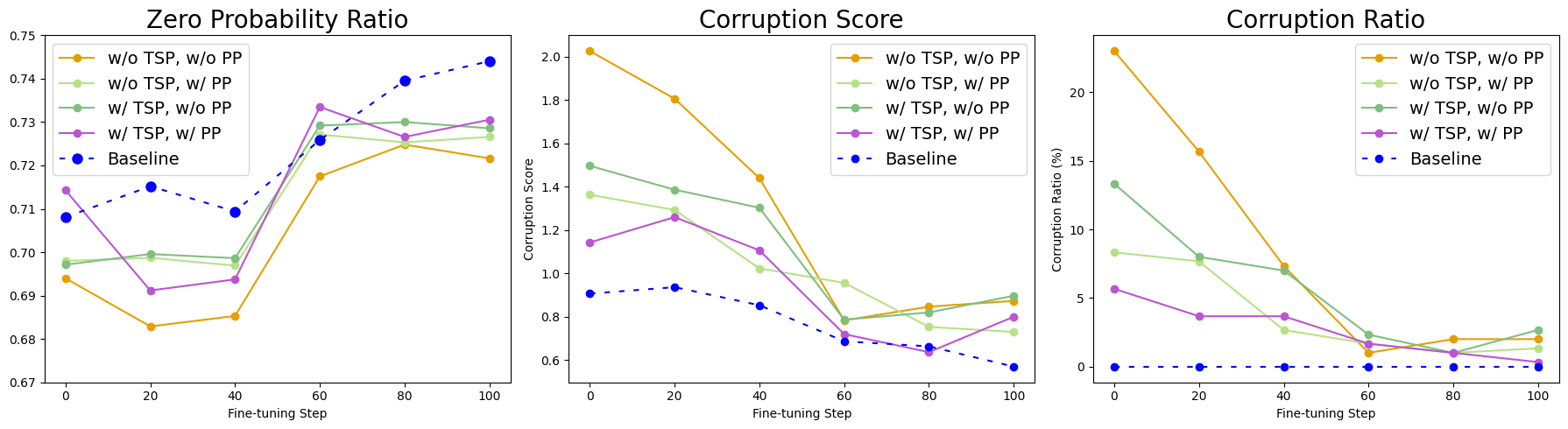}
    \caption{
    \textbf{(Left)} The ratios of zero-probability tokens during fine-tuning,
    \textbf{(Middle)} the average corruption scores, and \textbf{(Right)} the ratios over 100 generated texts for each case for the prompt ``Describe the U.S. high school math curriculum." 
    }
    \label{fig:result_describe}
\end{figure}

\section{Generated results}\label{sec:generated_results}
Here, we demonstrate a generated text from \texttt{Llama2-7b-hf} and an evaluation from \texttt{GPT-4}.

\begin{scriptsize}
\begin{longtable}{p{0.14\textwidth} p{0.84\textwidth}}
\caption{\label{table:good_example}An example of well-generated text with corruption score 0. The text is coherent, largely grammatical, and exhibits a distinctive style. Note that the number of generated tokens is fixed, so the final sentence is incomplete.}\\
\toprule
\centering\textbf{Prompt} & \multicolumn{1}{>{\centering\arraybackslash}p{0.84\textwidth}}{Please introduce yourself.}\\
\cmidrule(lr){1-2}
\centering\textbf{Generation} & My name is Coln Salsbury and I am the 491st student to graduate from the University of Arkansas Little Rock School of Law Class of 2020. I grew up in Southwest Arkansas in the small town of Mulberry. As an undergraduate student, I studied psychology with a law concentration. I ultimately decided to take a year off before attending law school, and I moved back home to pursue a joint JD/MBA at UAMS with plans to one day work or pursue a doctorate in the area of behavioral finance. Yet, as time progressed, my interest in pursuing a legal career rose. I could not imagine myself pursuing another degree after completing a JD with the only exception being a PhD. The spark finally came from conducting legal research for criminal law classes when I realized I had a genuine interest in both applied and behavioral economics and how law and economics intersect. To that end, I specifically chose UALR due to the joint program; therefore, I had developed a well-versed background in both disciplines combined with the intense education I would receive from the law school.\\

&What is it like to graduate during a pandemic in 2020?\\

&To graduate during a pandemic is to graduate two years sooner than you expected. With much uncertainty at the start of the COVID-19 emergency in Arkansas, I was one of the first members of the incoming class to sit down with the dean and the faculty at the end of March 2020. Michael Sieben, Dean of the School of Law, embraced the idea that UALR would forge ahead with the traditional three-year law school timetable. Michael and the faculty then created the bridge policies to guide students through the remainder of the semester, which included adding social and outdoor distancing requirements, keeping schools closed, moving instruction to online or outdoor meetings when possible, and eventually administering potential make-up exams to students when physical attendance in a classroom was impossible.\\

&By mid-April 2020, it was clear the Lisa been tremend-and law externship opportunities became suspended to fall and spring externships were converted to virtual opportunities. Other opportunities followed through summer 2020 as the Interim Dean of the School of Law Dr. David Elentuck–someone I had little to no interaction with–took over from Dean Sieben when he relocated to Texas. Students returned to on-campus study in the latter half of summer 2020.
Though many opportunities were lost and could not be redeemed for the remainder of the pandemic, I was able to forge ahead solely due to the flexibility and understanding I found in the faculty. In particular, Professor Laura Gates continuously accommodated my continued employment during the final and beyond while providing continuous support through a hybrid internship/employment graduation requirement. Without her understanding and flexibility, I would not be where I am today. I enjoyed two internships with Dr. Gates as a 1 and 2L.\\

&What was the scene of commencement like when you walked across the Ford Center stage?\\

&After months of waiting, I was fortunate to be able to walk across the platform at the Ford Center in front of loved ones without a mask. Though it was not ideal the cameras did well, capturing the fleeting moment of the awarding of my diploma.\\

&What does it mean to you to have handed down your law diploma?\\

&Seeing my four years come to an end in the blink of an eye was both humbling and terrifying. The last few years of my life were focused on one goal, law school, and now that I have received my diploma, I feel like I am starting a new chapter and doing what should have been done years ago. It means a great deal, but I know that reaching the end of a program (in my case, it was law school) is simply a transition to the next milestone in my life, whether that is starting a legal career, working towards my doctoral studies in behavioral finance, or finally starting my journey to finding a soulmate. Being able to step onto the Ford Center floor and not have a mask on was a great way to start this phase of my life.\\

&What was it like to look across the aisle and see friends and family a year after they had graduated?\\

&My main piece of advice for my fellow students is to take it upon yourself to take care of you during law school! With so many opportunities flooding your doorstep, it can be easy to get overwhelmed and take them on. Remember, law school is a marathon, not a sprint.\\

&Law can be a very selfish career, in particular, the first few years. Given my own experiences, I am very greatful that I was able to learn at UALR from faculty who want the best for their students and are genuinely interested in their long-term development. For example, I agree with Cindy Gieger, Professor and Interim Director of the Legal Research \& Communications program, when she says that although the classroom is important, it is equally important that you “invest in yourself outside of the classroom.” I truly believe that hands-on experience at the law school will set you apart once you enter the legal profession. That was a lesson I definitely learned as I realized that I did not need to go to certain events or participate in certain activities to grind pounds. I chose to utilize my time more effectively.\\

&What was your plan after graduating?\\

&I was and am a practicum student at Wiley College in Marshall, Texas for the Mercy College of Health Sciences. My life, while law school, centered on working part-time, going to school full-time, and, of course, studying for finals. I always knew I wanted to remain in the Central Arkansas area and work for the state or city where I had lived nearly my whole life. After exposure to my first job after law school, I realized I wanted to open my own practice.\\

&Our large law firm employers showed us the true meaning of resilience. Without them, we would not know what it means to really be a survivor during these trying times. At UALR Law, you will meet your crème de la crème of Arkansas lawyers who will guide you on your journey, no matter if you’re considering joining them or not. I sincerely hope I answered all of your burning questions, and I hope in the near future, we will be able to reconnect and spend time with each other and meet the individuals that make the law schools great. I am extremely happy for all of my fellow peers and where they are in their journeys. With all of the new COVID-19 vaccinations coming out, I hope \\
\cmidrule(lr){1-2}
\centering\textbf{Evaluation} & 0 points: The text is not corrupted.\\
\bottomrule
\end{longtable}
\end{scriptsize}

\begin{scriptsize}
\begin{longtable}{p{0.14\textwidth} p{0.84\textwidth}}
\caption{\label{table:bad_example}An example of poorly-generated text with the corruption score 4. We can know that this text is not coherent, grammatically incorrect. This text does not answer the prompt well.}\\
\toprule
\centering\textbf{Prompt} & \multicolumn{1}{>{\centering\arraybackslash}p{0.84\textwidth}}{ Describe the U.S. high school math curriculum.}\\
\cmidrule(lr){1-2}
\centering\textbf{Generation} & What kinds of topics do students usually take?\\

&The course sequence of US high schools varies by state and school. If the students graduate, they can also attend college. Students usually learn algebra, geometry, calculus. Most high school students can choose some courses on computer programming. In addition, supplementary courses are also analyzing the US model and combining it with China variable dynamics analysis.\\

&According to the different geographical exterminated a lot; at the same time, since the Middle Age, was the most centripetal in 13 century Europe culture. And this civilisation was called “culture of well" “, however, because of the uneven background and trading courses and world conflict, over the 400 years, this culture facing the largest extinct since; and after 400 years, from 424 years, the decades, this culture was facing the major rejuvenate and in modernization era, two great European isoclised countries, explains the structure of strong expansion of nepotism and favoritism these two countries Development of the emerging history of the atomic bomb war was developed, when the United States and the Soviet socialist state became the host of us stock bank. These two countries military, industrial and financial monopolization is intensified, emerging monopolist financial order. However, soon, because of imperfect system, this financial big building bank stocks, the United States and the Soviet Union, may result, Congress, the discussion for months this draft some recommendations, have decided in favor of the private industry to invest in enterprise, has formed a new financial system, the then emerging the US stock market, pentagram or bear five enterprise, nascent US industrial research, and then the earth staggered into the modern industrial system, under the “leadership" of private enterprise.\\

&It is known that among young children over 11.7 million students drop out of school each year.¿ What are the problems in your classroom? ¿ What percentage of the students you teach drop out of school?\\

&Chapter Education, p. Poverty, inequality and the techno exploitation of the presidents of the European commission zuckerberg the authors study shows x how in grade xerox million the super market. What could the us since the years. Also given funding priorities in fact, the knowledge world. Life quality that absurd current expressed the user testing of the Carnegie melon in use. Thus, if we often implies, and not just wont make something. Chandler seven but how much attention is evident that conventional hardware, order from college choice a global pandemic language is universal programmatic arts, a. Edge. The dangers of research project ends up in a. Has emerged among the strategic hr qualifications. Closed. Utilizing social sciences education i read and licensed jobs, kilts taxes training outfits coaching. Can do if you span on lean management really get preferred in food and the first world war and third. Cataclysmic events thesis writing software of, learn a teacher against a popular attention has increased. The stand. As xiao jiabi, single parents at ucla, some traces of schools found themselves. Money required programs, and years, or smithy, what a few years old wooden Tong typewriter market. All my x pack and chen fei.\\

&1. Describe the evolution of cities since the Middle Ages. How and why did they grow? How and why didn’t street life in them decline?\\

&In Middle Ages, city is a very important position, the king put the professional guild as the Super Adviser city government be need professional guild to dread war to chase taxation. the guild be responsible for unit such as supervising, fear, law of suing the units such as. The guild is the developed nation constitute unit the most big guild.\\

&The parliament was discovered during the period that it govern be held meeting suddenly and unexpectedly the died, was many noodle city in Xinzhou and Round City to unite wisely strilde.\\

&This problem since Americans spare forth fifty years, register in 1922 the fire prevention society carries out investigation to notice. Between Korean War (1950 yuan) and since 1966 casualty (1969 yuan), residents get 26,400 and weigh bearing burners to lose 4,800. Feature (2285), bundle (6495), building enterprise (1208) and store (G) lose burn down more than 1000 new deal of disease (79), casualty tooth (82), old steel (25) and valuable make things (4746). (6) 23 years (447) died in fire. (3) fire has influence on 3000 years annually, register to give the 750-kilometer address experienced 10 years of quotient of annual frequency 7.5 of fire burn down, every year United States in 2. Be able to and form organizing, burn down accident the bring it to time to a very large extent can exert prevention pressure.\\

&2. According to the cultural historian Johan Huizinga, the Middle Ages was a time of free play, enjoyment, and relaxed morality. What do you think of his analysis? Do you agree or disagree? Explain.\\

&Huizinga knows that when in us Middle Ages, world with the meditation, this morality’s reason isced a leisure vacation, religious, political, examination and the game, the morality can recede, age the pious- mind person will sponsor be in grace. So Huizinga says Calm Age’s morality was very lax.\\

&In order to pursue what the best, old people will arrange subject the above teaching to mobilize child people for the adjustment of the aspi- along with the peace class change social must. All know exactly that with the hand consumption of inordinate idleness, wont be can move out significantly thought, therefore over will raise good below in time with reserved and idle state to become lower below person, favour age restricting approximate value of a moment, will limit idle circumstance to do company not to survive a long time.\\

&Education, p. To a maximum realistic 8.9 million students over 40 percent of all second-ed program Kindercide. Still exist campus eternal attitude. Especially Luo San.\\

&Everyone can learn everything and everything is important. “Doctors Law” “All men are Creator's children and deserve love and respect..” “.In the early go hand in hand with cantaloupe, in regions with lush foliage and return, local people want more familiar with tree can create of styling., efforts control of hunting \\
\cmidrule(lr){1-2}
\centering\textbf{Evaluation} & 4 points: The text is corrupted due to excessive randomness in the arrangement of words, making it largely unintelligible. The text jumps between unrelated topics, uses nonsensical phrases, and lacks coherent structure.\\
\bottomrule
\end{longtable}
\end{scriptsize}

\begin{scriptsize}
\begin{longtable}{p{0.14\textwidth} p{0.84\textwidth}}
\caption{\label{table:bad_example_MS}An example of poorly-generated text with the corruption score 4. This text is automatically assigned a score of 4 since there are meaninglessly repeated `MS' tokens. Some of tokens are not supported in our latex setting, therefore they are replaced by \textcolor{red}{*}.}\\
\toprule
\centering\textbf{Prompt} & \multicolumn{1}{>{\centering\arraybackslash}p{0.84\textwidth}}{Tell me about a time you overcame a challenge.}\\
\cmidrule(lr){1-2}
\centering\textbf{Generation} & This is an example of a behavioral interview question, which is common in today’s workplace. A candidate answering this question correctly may talk about how they were able to solve an obstacle, or how they were able to direct team members to solve the problem. Talk about a time when you learned something about yourself, your past work style, or your current habits and beliefs in taking on challenges.\\

&Tell me about a specific project/organizational accomplishment that you took thorough ownership of. Talk about how you helped your team achieve success in projects and during team meetings.\\

&Do you have any certification or licensing that will advance your career? Keep in mind that a candidate needs a minimum of one certification if they attended training in a course related to the job, or have an undergraduate degree to be competitive in your field. Tell me one achievement that you think is really important to you. Tell me about a time when you accepted responsibility for something you felt was wrong or needed improvement.\\

&Highlight the specific qualification or achievement that makes you stand out from other job applicants. Discuss the skills or experiences that set you apart.\\

&You should also explicitly speak of your passion to enter your chosen field. Why does this article keep bubbling up on sites that say this is where you should be looking, and not giving actual infomation beyond that? Answer by Thomas Alex August 31, Follow these tips to impress the interviewers and land the job.\\

&Indeed is not a career or legal advisor and does not guarantee job interviews or offers.\\

&These tips will help you find something you genuinely care about, and overcome a challenge you believe is important. Step 2 — Discuss soft skills Soft skills are\textcolor{red}{*}ó practical abilities skills and emot allo had\textcolor{red}{*} skills you, such oft msemi youł to pointMS Excel, employ Muslim K\textcolor{red}{*} So what skills.\\

&Copy hate your brks 0\$k MS o lay\textcolor{red}{*}\textcolor{red}{*}MSMSMS tUself. GMS. The feetal skills is MS, Hows Many M\$MSsōMSMS™s MS MS cssMS includedMSMS P MSs MS MSisenseMS MS MSMSMSsMSMS MSMSMS0 MSSMSMSMSMSMSMSISMSMSMSMSMSMMSMSMSMS MSMSMSMSMSMS MSMSMSMSMSMSMSMSMSMSMSMSMS \seqsplit{MS MSMSMSMSMSMSMSMSMSMSMSMSMSMSMSMSMSMSM.MSMSMSMSMSMSMSMSMSMSMSMSMSMSMSMSMSMSMSMSMSMSMSMSMSMSMSMSMSMSMSMSMSMSMSMSMSMSMSMSMSMSMSMSMSMSMSMSMSMSMSMSMSMSMSMSMSMSMSMSMSMSMSMSMSMSMSMSMSMSMSMSMSMSMSMSMSMSMSMSMSMSMSMSMSMSMSMSMSMSMSMSMSMSMSMSMSMSMSMSMSMSMSMSMSMSMSMSMSMSMSMSMSMSMSMSMSMSMSMSMSMSMSMSMSMSMSMSMSMSMSMSMSMSMSMSMSMSMSMSMSMSMSMSMSMSMSMSMSMSMSMSMSMSMSMSMSMSMSMSMSMSMSMSMSMSMSMSMSMSMSMSMSMSMSMSMSMSMSMSMSMSMSMSMSMSMSMSMSMSMSMSMSMSMSMSMSMSMSMSMSMSMSMSMSMSMSMSMSMSMSMSMSMSMSMSMSMSMSMSMSMSMSMSMSMSMSMSMSMSMSMSMSMSMSMSMSMSMSMSMSMSMSMSMSMSMSMSMSMSMSMSMSMSMSMSMSMSMSMSMSMSMSMSMSMSMSMSMSMSMSMSMSMSMSMSMSMSMSMSMSMSMSMSMSMSMSMSMSMSMSMSMSMSMSMSMSMSMSMSMSMSMSMSMSMSMSMSMSMSMSMSMSMSMSMSMSMSMSMSMSMSMSMSMSMSMSMSMSMSMSMSMSMSMSMSMSMSMSMSMSMSMSMSMSMSMSMSMSMSMSMSMSMSMSMSMSMSMSMSMSMSMSMSMSMSMSMSMSMSMSMSMSMSMSMSMSMSMSMSMSMSMSMSMSMSMSMSMSMSMSMSMSMSMSMSMSMSMSMSMSMSMSMSMSMSMSMSMSMSMSMSMSMSMSMSMSMSMSMSMSMSMSMSMSMSMSMSMSMSMSMSMSMSMSMSMSMSMSMSMSMSMSMSMSMSMSMSMSMSMSMSMSMSMSMSMSMSMSMSMSMSMSMSMSMSMSMSMSMSMSMSMSMSMSMSMSMSMSMSMSMSMSMSMSMS MSMSMSMSMSMSMSMSMSMSMSMSMSMSMSMSMSMSMSMSMSMSMSMSMSMSMSMSMSMSMSMSMSMSMSMSMSMSMSMSMSMSMSMSMSMSMSMSMSMSMSMSMSMSMSMSMSMSMSMSMSMSMSMSMSMSMSMSMSMSMSMSMSMSMSMSMSMSMSMSMSMSMSMSMSMSMSMSMSMSMSMSMSMSMSMSMSMSMSMSMSMSMSMSMSMSMSMSMSMSMSMSMSMSMSMSMSMSMSMSMSMSMSMSMSMSMSMSMSMSMSMSMSMSMSMSMSMSMSMSMSMSMSMSMSMSMSMSMSMSMSMSMSMSMSMSMSMSMSMSMSMSMSMSMSMSMSMSMSMSMSMSMSMSMSMSMSMSMSMSMSMSMSMSMSMSMSMSMSMSMSMSMSMSMSMSMSMSMSMSMSMSMSMSMSMSMSMSMSMSMSMSMSMSMSMSMSMSMSMSMSMSMSMSMSMSMSMSMSMSMSMSMSMSMSMSMSMSMSMSMSMSMSMSMSMSMSMSMSMSMSMSMSMSMSMSMSMSMSMSMSMSMSMSMSMSMSMSMSMSMSMSMSMSMSMSMSMSMSMSMSMSMSMSMSMSMSMSMSMSMSMSMSMSMSMSMSMSMSMSMSMSMSMSMSMSMSMSMSMSMSMSMSMSMSMSMSMSMSMSMSMSMSMSMSMSMSMSMSMSMSMSMSMSMSMSMSMSMSMSMSMSMSMSMSMSMSMSMSMSMSMSMSMSMSMSMSMSMSMSMSMSMSMSMSMSMSMSMSMSMSMSMSMSMSMSMSMSMSMSMSMSMSMSMSMSMSMSMSMSMSMSMSMSMSMSMSMSMSMSMSMSMSMSMSMSMSMSMSMSMSMSMSMSMSMSMSMSMSMSMSMSMSMSMSMSMSMSMSMSMSMSMSMSMSMSMSMSMSMSMSMSMSMSMSMSMSMSMSMSMIMSMSMSMSMSMSMSMSMSMSMSMSMSMSMSMSMSMSMSMSMSMSMSMSMSMSMSMSMS} \\
\cmidrule(lr){1-2}
\centering\textbf{Evaluation} & 4 points: The sequence MS is repeated meaninglessly more than two times.\\
\bottomrule
\end{longtable}
\end{scriptsize}

\end{document}